%% file: main.tex
\documentclass[11pt,a4paper]{article}

\usepackage[T1]{fontenc}
\usepackage[utf8]{inputenc}
\usepackage{CJKutf8}
\usepackage{times}

\usepackage[margin=2.4cm,headheight=38pt,headsep=18pt,footskip=28pt]{geometry}
\usepackage{setspace}
\usepackage[table,dvipsnames]{xcolor}
\usepackage{colortbl}
\usepackage{graphicx}
\usepackage{booktabs}
\usepackage{array}
\usepackage{multirow}
\usepackage{tabularx}

\usepackage{amsmath,amssymb}

\definecolor{AMapBlue}{HTML}{48B8F0}
\definecolor{AMapDeepBlue}{HTML}{0877C9}
\definecolor{AMapSky}{HTML}{EAF8FF}
\definecolor{AMapLine}{HTML}{9EDBFA}
\definecolor{TextBlack}{HTML}{20242A}

\usepackage[numbers,sort&compress]{natbib}

\usepackage{enumitem}
\setlist{nosep,leftmargin=1.5em}
\usepackage{caption}
\usepackage{fancyhdr}
\usepackage{lastpage}
\usepackage{titlesec}
\usepackage[most]{tcolorbox}
\usepackage{mdframed}
\usepackage[colorlinks=true,linkcolor=AMapDeepBlue,citecolor=AMapDeepBlue,urlcolor=AMapDeepBlue]{hyperref}

\newcommand{\reportname}{LongForcing}
\newcommand{\reportlogo}{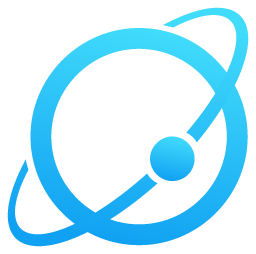}
\newcommand{\reportlogoheight}{18pt}

\newcommand{\orgname}{AMAP CV Lab \textbar{} Alibaba Group}
\newcommand{\reportcenter}[1]{\raisebox{\dimexpr(\depth-\height)/2\relax}{#1}}
\newlength{\reporttextbaselineoffset}
\newcommand{\reporttextalign}[1]{\raisebox{\reporttextbaselineoffset}{#1}}

\renewcommand{\headrulewidth}{1.2pt}
\renewcommand{\footrulewidth}{0.8pt}
\renewcommand{\headrule}{\hbox to\headwidth{\color{AMapBlue}\leaders\hrule height \headrulewidth\hfill}}
\renewcommand{\footrule}{\hbox to\headwidth{\color{AMapLine}\leaders\hrule height \footrulewidth\hfill}}

\titleformat{\section}
  {\Large\bfseries\color{AMapDeepBlue}}
  {\thesection}{0.75em}{}
  [\vspace{0.32em}{\color{AMapLine}\titlerule[0.6pt]}]
\titleformat{\subsection}
  {\large\bfseries\color{AMapDeepBlue}}
  {\thesubsection}{0.65em}{}
\titleformat{\subsubsection}
  {\normalsize\bfseries\color{AMapDeepBlue}}
  {\thesubsubsection}{0.55em}{}
\titlespacing*{\section}{0pt}{1.55em plus 0.25em minus 0.1em}{0.95em}
\titlespacing*{\subsection}{0pt}{1.25em plus 0.2em minus 0.1em}{0.55em}
\titlespacing*{\subsubsection}{0pt}{1.05em plus 0.15em minus 0.08em}{0.45em}

\newtcolorbox{abstractbox}{
  enhanced,
  breakable,
  colback=AMapSky,
  colframe=AMapBlue,
  coltitle=white,
  colbacktitle=AMapBlue,
  title=Abstract,
  fonttitle=\Large\bfseries,
  halign title=center,
  toptitle=5pt,
  bottomtitle=5pt,
  fontupper=\large,
  boxrule=0.8pt,
  arc=1.5mm,
  left=12pt,
  right=12pt,
  top=10pt,
  bottom=10pt,
  before skip=1.2em,
  after skip=0.5em
}

\newmdenv[
  backgroundcolor=gray!8,
  linewidth=0pt,
  innerleftmargin=10pt,
  innerrightmargin=10pt,
  innertopmargin=5pt,
  innerbottommargin=5pt,
  skipabove=5pt,
  skipbelow=5pt
]{cnblock}

\definecolor{DimAction}{HTML}{0877C9}
\definecolor{DimVisual}{HTML}{2E9E5B}
\definecolor{DimPhysics}{HTML}{E07B00}
\definecolor{DimMemory}{HTML}{8E44AD}

\usepackage{algorithm}
\usepackage{algpseudocode}

\usepackage{amsthm}
\newtheorem{definition}{Definition}[section]

\newtheorem{proposition}[definition]{Proposition}
\newtheorem{theorem}[definition]{Theorem}

\usepackage{needspace}
\let\reportsection\section
\renewcommand{\section}{\Needspace{7\baselineskip}\reportsection}
\begin{document}
\thispagestyle{fancy}
\begin{center}
{\bfseries\color{AMapDeepBlue}\LARGE LongForcing: Scaling Temporal Manifold Matching for Long-Horizon World Rollout\par}
\vspace{0.2em}
{\large\color{black}
Ziyu Zhu\textsuperscript{\ensuremath{1,*}}\quad Zhaoxu Sun\textsuperscript{\ensuremath{1,*,\dagger}}\quad Fan Jiang\textsuperscript{\ensuremath{1,*,\ddagger}}\quad Mengchao Wang\textsuperscript{\ensuremath{1,*}}\\
Junfeng Ni\textsuperscript{\ensuremath{1,2,\S}}\quad Ting-Bing Xu\textsuperscript{1}\quad Zhicheng Liu\textsuperscript{1}\quad Mu Xu\textsuperscript{1}\par}
\vspace{-0.1em}
{\small\color{black}\textsuperscript{1}AMAP, Alibaba Group\qquad\textsuperscript{2}Tsinghua University}
\end{center}
\vspace{-1em}
\begingroup
\renewcommand{\thefootnote}{}
\makeatletter
\long\def\@makefntext#1{\noindent#1}
\makeatother
\footnotetext[0]{\textsuperscript{*}Equal contribution.\quad
\textsuperscript{\ensuremath{\dagger}}Project leader.\quad
\textsuperscript{\ensuremath{\ddagger}}Corresponding author.\\
\textsuperscript{\S}Work done during internship at AMAP, Alibaba Group.}
\endgroup

\begin{abstractbox}
Autoregressive video diffusion models have emerged as a foundation for long-horizon world rollout. However, when extended far beyond their training windows, these models suffer from progressive drift: visual quality deteriorates, motion becomes inconsistent, and object identity gradually collapses as generated frames are recursively reused as context.
We study a \emph{supervision-horizon gap}: short-horizon distillation leaves later rollout states without direct supervision. We show that matching short-window marginals need not determine long-horizon behavior, while longer on-policy trajectories expose delayed failures to training. This motivates \emph{temporal manifold matching at scale} for long-horizon world models.
Building on this insight, we introduce \textbf{LongForcing}, a distillation approach that scales teacher supervision to long student self-rollouts. LongForcing preserves the short-horizon teacher-forcing and causal ODE-distillation stages, while replacing the real-score teacher in the final distribution-matching stage with an extended-horizon bidirectional teacher. The long-horizon teacher supervises extended student self-rollouts using the standard distribution-matching formulation, without modifying the student architecture or increasing inference cost.
We evaluate LongForcing on text-to-video generation and interactive action-conditioned world models, achieving a VBench Quality score of 85.52 and improving WorldRoamBench 60s action score by 23.9 points in first-person ablations. Qualitative stress tests further sample interactive rollouts up to \textbf{24 hours}, far beyond the training horizon. These results suggest that scaling supervision along the temporal axis is a practical step toward long-horizon world models.
The project website and code are at \href{https://github.com/Fantasy-AMAP/LongForcing}{https://github.com/Fantasy-AMAP/LongForcing}.

\end{abstractbox}

\suppressfloats[t]
\begin{figure*}[t]
    \centering
    \includegraphics[
        width=\textwidth
    ]{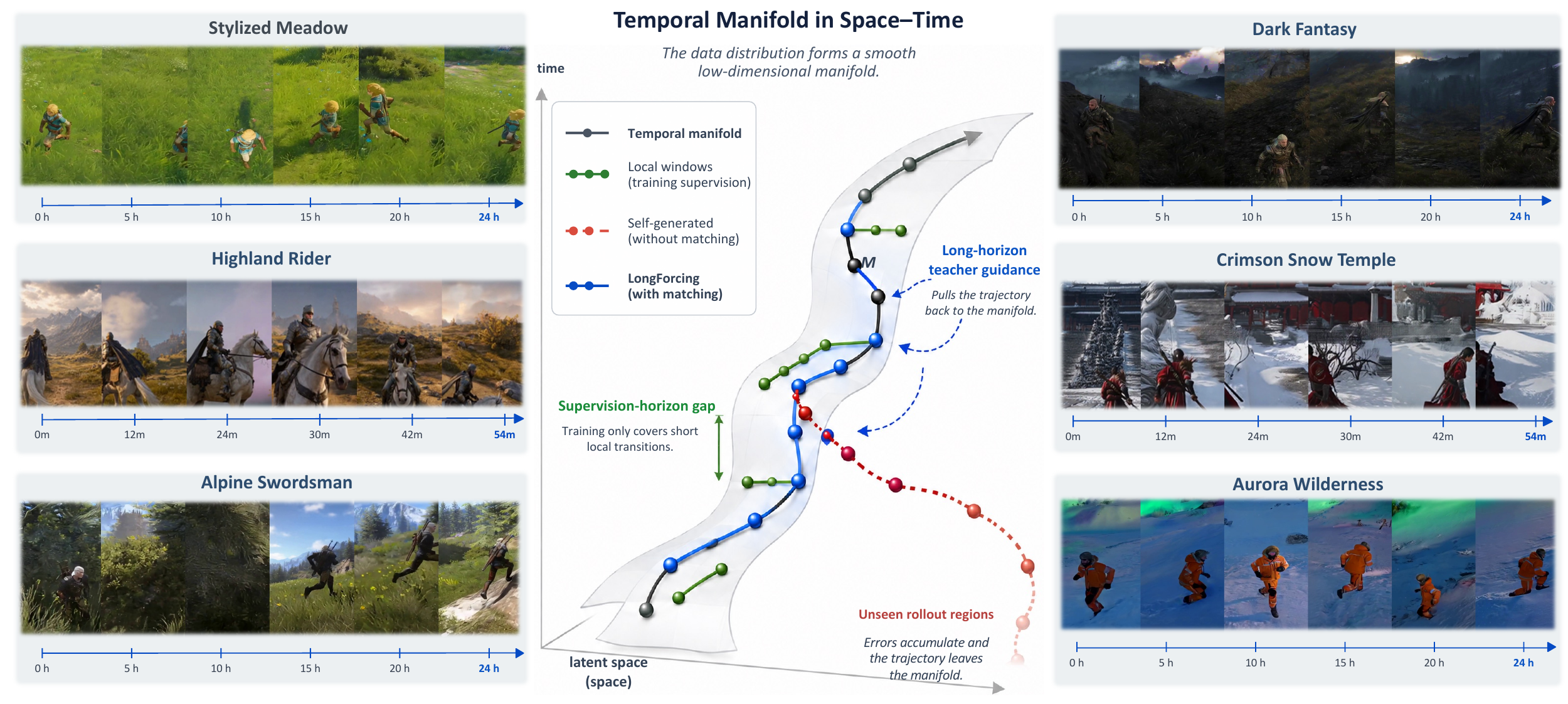}
    \caption{LongForcing extends teacher guidance over student rollouts. Qualitative snapshots from the 5B interactive model span stress-test rollouts sampled up to \textbf{24 hours}.}
    \label{fig:teaser}

\end{figure*}

\setlength{\parskip}{0.5em plus 0.08em minus 0.06em}
\titlespacing*{\paragraph}{0pt}{0.65em}{1em}
\section{Introduction}
\label{sec:introduction}

\begin{center}
    \emph{``We can only see a short distance ahead, but we can see plenty there that needs to be done.''}\\[-1pt]
    --- \textsc{Alan Turing}
\end{center}

Autoregressive (AR) video diffusion models are becoming a foundational paradigm for long-horizon world rollout.
An AR model generates video chunk by chunk, with each new chunk conditioned on previously generated frames.
Repeating this process enables rollouts far longer than individual training clips.
This capability is essential for both interactive world models~\citep{bruce2024genie,google2025genie3,valevski2025gamengen,decart2024oasis,zhang2025matrixgame,sun2025worldplay}, where a world must remain coherent and responsive as the user acts, and long-form text-to-video systems~\citep{kim2024fifo,huang2025selfforcing}, where visual quality, motion, and identity must persist far beyond the initial clip.
However, the same recursive mechanism that enables long rollouts also makes AR models vulnerable to \emph{rollout drift}.
Because every generated chunk becomes context for the next, small errors compound, the conditioning distribution moves away from the data seen during training, and the model gradually loses visual fidelity.

Modern real-time AR video generators are often obtained through a three-stage distillation pipeline.
Teacher forcing first adapts a bidirectional video model to causal conditioning, ODE distillation compresses its denoising trajectory into a few-step generator, and distribution matching distillation (DMD) aligns the student's self-generated distribution with a teacher~\citep{yin2025causvid,huang2025selfforcing,zhu2026causalforcing}.
This recipe is highly effective within the few-second horizon on which it is trained, but deployment requires closed-loop generation over minutes.
Teacher forcing learns from short clean histories, causal ODE distillation uses short teacher-generated trajectories, and short-horizon DMD evaluates only short student rollouts; later rollout states receive no direct supervision from this objective.

Recent approaches mitigate long-horizon drift through sparse local matching~\citep{cui2025selfforcingpp}, inference-time memory and positional modifications~\citep{li2026rolling,yi2025deepforcing,cui2026lol}, or context perturbation~\citep{zhu2025astra}.
Local matching can supervise late rollout states, but its jointly scored windows remain short; inference-time modifications instead stabilize the conditioning context.
Neither mechanism by itself matches the longer joint path, leaving cross-window dependencies that can affect visual fidelity, identity, dynamics, and action consistency.

These failures reveal a \emph{supervision-horizon gap} between short-window training and long-horizon deployment.
Our principle is that \textbf{temporal manifold matching should scale with the trajectories visited on policy}, where temporal manifold informally describes high-probability regions of the teacher's path distribution.
Section~\ref{sec:theory} formalizes this view using path distributions, KL divergence, and total variation.
Matching short windows does not determine the distribution of a long rollout, whereas matching clean trajectories controls drift within the supervised horizon.
Beyond this horizon, stability also depends on teacher leakage, student--teacher mismatch along visited states, and recovery from drifted states.
This motivates extending distribution matching over longer student rollouts while keeping teacher forcing and ODE distillation short.

Guided by this principle, we introduce \textbf{LongForcing}, which replaces Causal Forcing's DMD real-score teacher with an extended-horizon teacher for longer self-generated trajectories.
The student rolls out on policy, while an extended-horizon teacher supplies distribution-level supervision across a long, contiguous portion of each self-generated training trajectory, requiring no architectural changes.
Empirically, LongForcing improves text-to-video VBench Quality by 0.93 points over Causal Forcing and raises WorldRoamBench 60s action score by 23.9 points (37\% relative) in first-person ablations. Qualitative stress tests sample interactive rollouts up to \textbf{24 hours}.
These results position temporal manifold matching at scale as a practical step toward long-horizon world models.

Our contributions are threefold:
\begin{itemize}
    \item We formalize the \textit{supervision-horizon gap}: longer student rollouts expose delayed failures, and joint path matching constrains dependencies left undetermined by local-window marginals.
    \item We introduce \textbf{LongForcing}, which extends the teacher’s supervision horizon in distribution-matching distillation to guide longer student-generated trajectories.
    \item We evaluate LongForcing across interactive world models and text-to-video generation, measuring improvements in action following and video quality alongside their dependence on supervision length.
\end{itemize}

\section{Related Work}

\subsection{Autoregressive Long-Video Generation}

Autoregressive video generation has evolved from discrete-token transformers~\citep{wu2021godiva,hong2022cogvideo,kondratyuk2024videopoet,yan2021videogpt} to diffusion and flow models that stream through temporally varying noise levels, causal attention, or sliding-window denoising~\citep{chen2024diffusionforcing,xie2025progressive,sun2025ardiffusion,teng2025magi,kodaira2026streamdit}.
Few-step variants further use video DMD, student-generated contexts, progressive-window denoising, corrected causal ODE initialization, or diagonal distillation~\citep{yin2025causvid,huang2025selfforcing,liu2025rolling,zhu2026causalforcing,liu2026diagdistill}.
To extend generation beyond the training horizon, training-free methods modify noise schedules, denoising windows, attention, positional encodings, or temporal memory~\citep{qiu2024freenoise,kim2024fifo,yi2025deepforcing,cui2026lol,zhao2026relax,kim2026memrope}, whereas training-based approaches use self-generated segments, long-context tuning, hierarchical denoising, and drift simulation~\citep{cui2025selfforcingpp,yang2025longlive,chen2026contextforcing,zou2026hiar,yuan2026helios}.
Context Forcing~\citep{chen2026contextforcing} also uses a long-context teacher, combining continuation-conditioned DMD with Slow-Fast Memory. LongForcing retains Causal Forcing's student architecture and inference procedure, replacing its DMD teacher to score a longer contiguous, jointly noised student path.

\subsection{Interactive World Models}

Interactive world models use video generation as a controllable simulator for prediction, exploration, and closed-loop interaction.
Recent systems demonstrate playable game simulation~\citep{valevski2025gamengen,decart2024oasis,alonso2024diamond}, camera and action-controlled world exploration~\citep{zhang2025matrixgame,li2025gamecraft,sun2025worldplay}, and real-time interactive generation~\citep{google2025genie3,tencent2025hyworld15,shin2025motionstream}.
Unlike prompt-driven video generation, interactive rollout must preserve both world state and the consequences of continuously changing user actions.
Over extended sessions, accumulated errors can therefore degrade not only visual fidelity and identity, but also action responsiveness and physical consistency.
WorldRoamBench~\citep{xu2026worldroambench} explicitly evaluates these long-horizon properties across diverse interactive environments.
LongForcing scales teacher supervision of on-policy rollouts along the temporal axis in both interactive and text-conditioned generation.
Qualitative stress tests sample interactive rollouts up to \textbf{24 hours}, complementing the benchmark evaluation.

\begin{figure}[!t]
    \centering
    \includegraphics[width=\linewidth]{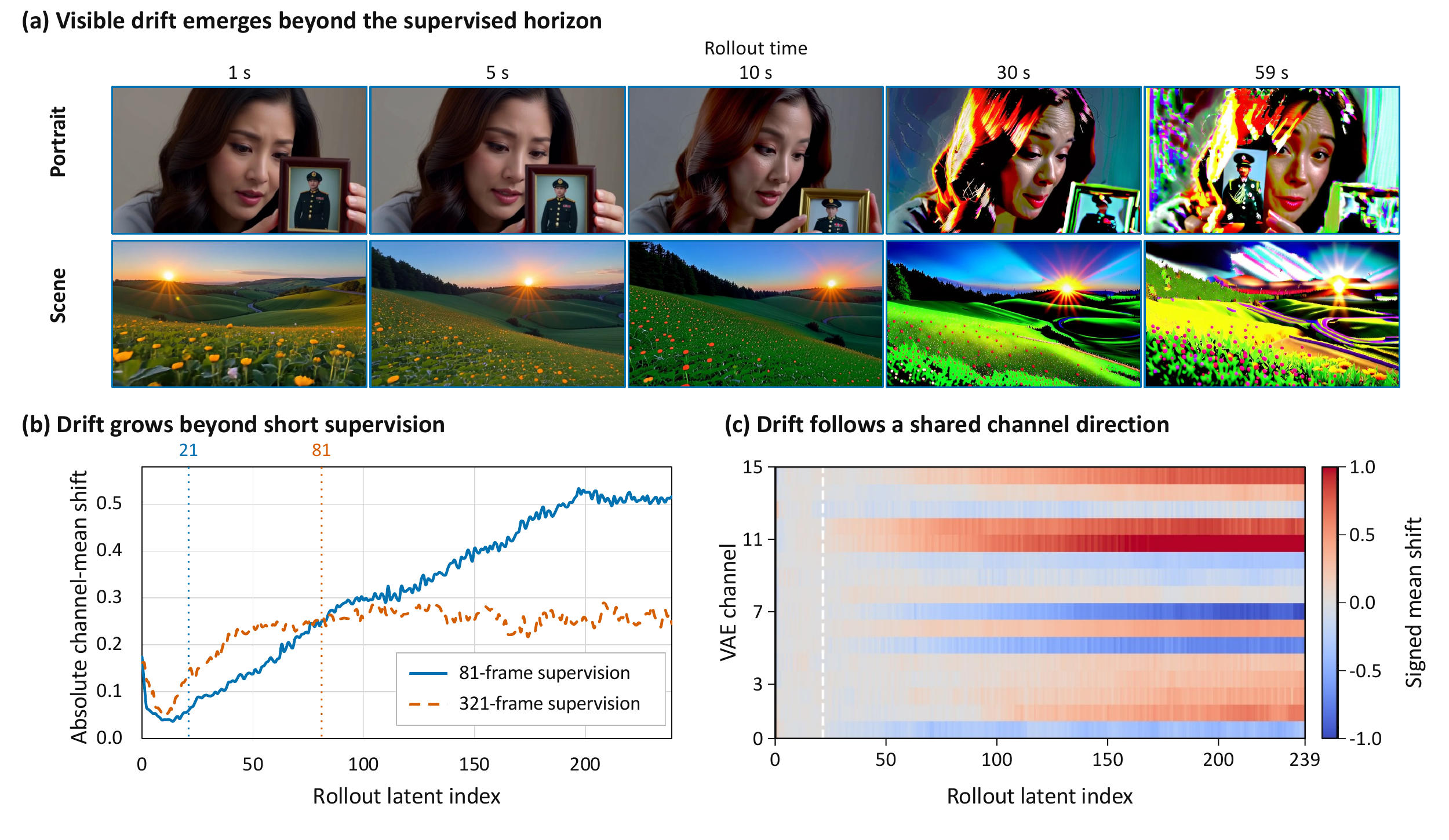}
    \caption{Visual and feature-space signatures of rollout drift. \textbf{(a)} Shared timestamps reveal late over-saturation and high-frequency artifacts. \textbf{(b)} Median absolute channel-mean shift from the early reference; dotted lines mark supervision horizons. \textbf{(c)} Median signed per-channel shift for the 81-frame model; persistent bands indicate a channel-wise direction shared across 20 videos.}
    \label{fig:latent-drift}
\end{figure}

\section{Drift in Autoregressive Video Rollouts}
\label{sec:background}

We first review autoregressive video diffusion and its standard distillation pipeline, then connect the empirical manifestations of long-video drift to the supervision coverage of closed-loop rollout.

\subsection{Autoregressive Video Diffusion and Distillation}
\label{sec:bg-ar-dmd}

\paragraph{Video diffusion.}
Following standard practice, video generation takes place in the latent space of a pretrained spatiotemporal VAE. Let $\mathbf z$ denote a clean video latent. At noise level $s\in[0,1]$, the forward process constructs~\citep{lipman2022flow,zhu2026causalforcing}
$\mathbf z^{(s)}=\alpha_s\mathbf z+\sigma_s\boldsymbol\epsilon$, where $\boldsymbol\epsilon\sim\mathcal N(\mathbf0,\mathbf I)$ and the superscript $(s)$ denotes the diffusion noise level. A score or velocity network learns the reverse dynamics, whose probability-flow ODE maps noise to a clean latent.

\paragraph{Autoregressive rollout.}
Partition a latent video into $K$ causal chunks $\mathbf z_{1:K}$, where each chunk is generated from the preceding chunks and condition $\mathbf c$. The resulting path distribution factorizes as $p_\theta(\mathbf z_{1:K}\mid\mathbf c)=\prod_{k=1}^{K}p_\theta(\mathbf z_k\mid\mathbf z_{<k},\mathbf c)$, where $\mathbf c$ denotes a text prompt or a prompt augmented with per-chunk actions.
At inference, each prediction is appended to the history, enabling streaming while accumulating error.

\paragraph{Few-step distillation and distribution matching.}
An efficient causal student is typically obtained through teacher forcing on short clean histories, causal ODE distillation into a few-step generator, and DMD refinement~\citep{zhu2026causalforcing,yin2024dmd,yin2024improved}.
DMD compares bidirectional real and fake scores on student-generated closed-loop paths; Self Forcing~\citep{huang2025selfforcing} applies this refinement on policy, but only within a short rollout window.

\subsection{Why Autoregressive Rollouts Drift}
\label{sec:bg-drift}

Long-video drift can appear as position or color shift, amplified blur and noise, and action inconsistency~\citep{yuan2026helios}. These mechanisms share a temporal structure: generated chunks recursively change the conditioning distribution. The relevant train--test mismatch is therefore between the short prefixes evaluated during training and the evolving on-policy path traversed during long deployment.

Fix a prefix-measurable drift detector $g_t(\mathbf z_{1:t})\in[0,1]$ and threshold $\gamma$, and define the first drift time and within-horizon event
\begin{equation}
\label{eq:drift-time}
\tau_{\rm drift}=\inf\{t\ge1:g_t>\gamma\},\qquad \mathcal B_H(\gamma)=\{\tau_{\rm drift}\le H\}.
\end{equation}
For a fixed rollout distribution $Q$ and $H_1<H_2$, lengthening the path exposes exactly the delayed failure mass
\begin{equation}
\label{eq:exposed-mass}
\Pr_Q(\tau_{\rm drift}\le H_2)-\Pr_Q(\tau_{\rm drift}\le H_1)=\Pr_Q(H_1<\tau_{\rm drift}\le H_2).
\end{equation}
Thus an objective computed only from $H_1$-step prefixes cannot evaluate failures whose first hitting time exceeds $H_1$. A longer rollout changes which delayed states can receive a teacher signal; whether optimization corrects them is addressed in Section~\ref{sec:theory}.

Figure~\ref{fig:latent-drift} reveals structured late-rollout shift in the shared causal-VAE latent space: the channel-mean deviation grows beyond the short supervision horizon, while the signed heat map shows a consistent channel-wise direction. Appendix~\ref{app:drift-metric} gives the exact reference window and aggregation procedure.

\setlength{\parskip}{0.70em plus 0.08em minus 0.06em}
\titlespacing*{\paragraph}{0pt}{3.25ex plus 1ex minus .2ex}{1em}

\section{Finite-Horizon Drift Control}
\label{sec:theory}

We separate three questions behind LongForcing: which failures can be detected within a finite training window, what matching trajectories within that window guarantees, and under what conditions those guarantees extend to later blocks.
Throughout, $H$ denotes a horizon of $H$ autoregressive chunks; experiments report the corresponding number of video frames. Appendix~\ref{app:theory} gives the complete setup, proofs, and scope conditions.

\subsection{The Supervision-Horizon Gap}
\label{sec:gap}

Let $S_t$ denote the student's causal state after chunk $t$, and let $\mathcal C_H=(S_0,\mathbf a_{1:H},\mathbf c)$ collect the initial state, an action sequence fixed over the rollout, and persistent conditions such as the text prompt. For this shared condition, $Q_{\theta,H}$ denotes the causal student's $H$-chunk path distribution and $Q^{*,\mathrm{long}}_{\phi,H}$ the extended-horizon teacher's joint reference. The distinction between matching local windows and matching this joint path is fundamental.

\begin{proposition}[Short windows do not identify the long path]
\label{prop:indistinguishable}
For every $W<T$, two causal path distributions can agree on every contiguous $W$-step marginal while having disjoint $T$-step supports. Hence even perfect local-window matching need not determine the long trajectory distribution.
\end{proposition}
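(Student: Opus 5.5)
Fix $W<T$. The plan is to build the two distributions on binary chunks, $\mathbf z_t\in\{0,1\}$, with a parity constraint spanning $W+1$ consecutive steps. Every $W$-window sees only $W$ of those $W+1$ coordinates, so it looks uniform. The full constraint is visible only at length $W+1\le T$.

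\textbf{Construction.} For $P$, take $\mathbf z_1,\dots,\mathbf z_W$ i.i.d.\ uniform bits. For $t>W$, set $\mathbf z_t=\mathbf z_{t-W}\oplus\cdots\oplus\mathbf z_{t-1}$. For $P'$, use the same first $W$ chunks and set $\mathbf z_t=1\oplus\mathbf z_{t-W}\oplus\cdots\oplus\mathbf z_{t-1}$. Both are causal, since each is specified by kernels $p(\mathbf z_t\mid\mathbf z_{<t})$. The $\mathbf c$-dependence is trivial, and the chunks could be embedded in any latent space, for example as two well-separated points.

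\textbf{Matching windows.} I claim that under both laws every contiguous window $(\mathbf z_s,\dots,\mathbf z_{s+W-1})$ is uniform on $\{0,1\}^W$. I will prove this by induction on $s$. For $s=1$ it holds by construction. For the step, the window at $s+1$ is the image of the window at $s$ under the map
\[
(x_1,\dots,x_W)\mapsto\bigl(x_2,\dots,x_W,\,b\oplus x_1\oplus\cdots\oplus x_W\bigr),
\]
with $b\in\{0,1\}$ fixed. This map is a bijection of $\{0,1\}^W$: $x_1$ is recovered from the last coordinate and $x_2,\dots,x_W$. A bijection preserves the uniform law, which closes the induction.

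\textbf{Disjoint long supports.} I will take $T\ge W+1$; equality $W=T-1$ is the tight case. Under $P$ the first $W+1$ chunks have XOR $0$ almost surely, and under $P'$ they have XOR $1$. So the supports of the $T$-step laws are disjoint sets, the total variation distance between them is $1$, and the KL divergence is infinite in either direction.

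\textbf{Main obstacle.} The only real subtlety is making sure the window-uniformity holds for every window position, not just the first one. The bijection argument handles this. A secondary issue is whether the reader needs continuous latents. If so, I would replace the bits by Gaussian bumps at $\pm1$ with vanishing width, or simply note that the proposition claims existence and a discrete example suffices.
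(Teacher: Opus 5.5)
Your construction is correct, but it takes a different route from the paper.

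The paper fixes $T$ and imposes one global parity constraint. Its law $Q_T^{(b)}$ is uniform on $\{x_{1:T}\in\{0,1\}^T:\sum_t x_t\equiv b \pmod 2\}$. It is realized causally by drawing $T-1$ fair bits and setting the last one. A one-line count then does the work: once $k<T$ coordinates are fixed, there are $2^{T-k-1}$ completions of each parity. So every marginal on fewer than $T$ coordinates, contiguous or not, is uniform under both laws.

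You instead fix $W$ and impose the parity on every $(W+1)$-window through an order-$W$ recurrence. Your shift-bijection induction then handles all window positions. In fact, XOR-ing two consecutive recurrence steps gives $z_{t+1}=z_{t-W}$. So your $P,P'$ are exactly the paper's pair $Q_{W+1}^{(0)},Q_{W+1}^{(1)}$, extended $(W+1)$-periodically. This also gives a second proof of your window claim: $W$ consecutive indices hit $W$ distinct residues mod $W+1$, so each window sees a $W$-coordinate marginal of $Q_{W+1}^{(b)}$, which is uniform by the paper's count.

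The two routes trade off as follows.
\begin{itemize}
\item For fixed $T$, the paper's pair covers every $W<T$ at once and hides the discrepancy until the full horizon. The cost is that its final kernel depends on every earlier chunk.
\item Your pair is a single pair of infinite-horizon processes valid for every $T>W$. Its kernels depend only on the last $W$ chunks, so a model with a $W$-chunk context window can realize them. The mismatch also recurs in every $(W+1)$-window rather than appearing once.
\end{itemize}

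One minor correction to your aside on continuous latents. A Gaussian bump of any positive width has full support, so it would destroy disjointness. Use compactly supported bumps instead, e.g.\ uniform on small disjoint intervals around $\pm1$. As in the paper, though, the discrete example already suffices.
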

\emph{Proof in Appendix~\ref{app:proof-indistinguishable}.}
Proposition~\ref{prop:indistinguishable} is an identifiability statement: local-window marginals leave cross-window coupling undetermined. Equation~\eqref{eq:exposed-mass} gives a separate exposure statement. For a fixed rollout law, an $H$-chunk prefix objective cannot evaluate failures first reached after $H$; extending that rollout exposes exactly the delayed-failure mass in the added suffix. This exposure makes those states available for supervision, but does not imply monotonic improvement across models.

\subsection{What Path Matching Guarantees}
\label{sec:closure}

For a shared condition distribution $\mu_H$, let $\overline Q_{\theta,H}=\mu_HQ_{\theta,H}$ and $\overline Q^*_{\phi,H}=\mu_HQ^{*,\mathrm{long}}_{\phi,H}$. The ideal clean path-matching objective is
\begin{equation}
\label{eq:ideal}
\mathcal L_H^{\mathrm{ideal}}
=\mathbb E_{\mathcal C_H\sim\mu_H}
D_{\mathrm{KL}}\!\left(Q_{\theta,H}(\cdot\mid\mathcal C_H)\,\middle\|\,Q^{*,\mathrm{long}}_{\phi,H}(\cdot\mid\mathcal C_H)\right).
\end{equation}
\begin{theorem}[Within-horizon drift control]
\label{thm:within}
For every measurable path event $\mathcal A_H$,
\begin{equation}
\label{eq:within}
\Pr_{\overline Q_{\theta,H}}(\mathcal A_H)
\le \Pr_{\overline Q^*_{\phi,H}}(\mathcal A_H)
+\sqrt{\mathcal L_H^{\mathrm{ideal}}/2}.
\end{equation}
\end{theorem}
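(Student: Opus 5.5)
The plan is to combine Pinsker's inequality with the chain rule for KL divergence over the shared condition. By the definition of total variation, every measurable path event $\mathcal A_H$ satisfies
\[
\Pr_{\overline Q_{\theta,H}}(\mathcal A_H)-\Pr_{\overline Q^*_{\phi,H}}(\mathcal A_H)\le \mathrm{TV}\bigl(\overline Q_{\theta,H},\overline Q^*_{\phi,H}\bigr).
\]
So it suffices to show that $\mathrm{TV}(\overline Q_{\theta,H},\overline Q^*_{\phi,H})\le\sqrt{\mathcal L_H^{\mathrm{ideal}}/2}$. The event $\mathcal A_H$ may depend jointly on the condition $\mathcal C_H$ and the path, since both joint laws live on the product space of conditions and $H$-chunk paths.

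First, I would apply Pinsker's inequality to the joint laws:
\[
\mathrm{TV}\bigl(\overline Q_{\theta,H},\overline Q^*_{\phi,H}\bigr)\le\sqrt{\tfrac12 D_{\mathrm{KL}}\bigl(\overline Q_{\theta,H}\,\big\|\,\overline Q^*_{\phi,H}\bigr)}.
\]
Second, I would use the chain rule for KL divergence. Both joint laws share the same first marginal $\mu_H$, so the marginal term vanishes and
\[
D_{\mathrm{KL}}(\mu_HQ_{\theta,H}\,\|\,\mu_HQ^{*,\mathrm{long}}_{\phi,H})=\mathbb E_{\mathcal C_H\sim\mu_H}D_{\mathrm{KL}}\bigl(Q_{\theta,H}(\cdot\mid\mathcal C_H)\,\|\,Q^{*,\mathrm{long}}_{\phi,H}(\cdot\mid\mathcal C_H)\bigr)=\mathcal L_H^{\mathrm{ideal}}.
\]
Substituting this identity into Pinsker's inequality gives Eq.~\eqref{eq:within}.

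An alternative route applies Pinsker conditionally on each $\mathcal C_H$ and then averages. Jensen's inequality (concavity of the square root) gives $\mathbb E\sqrt{\cdot}\le\sqrt{\mathbb E\,\cdot}$, which leads to the same bound, and I would mention it as a cross-check.

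The main obstacle is measure-theoretic care rather than any inequality. The chain rule requires the conditional laws to be genuine Markov kernels, i.e.\ regular conditional distributions, on standard Borel path spaces. This holds for latent tensors in Euclidean space. I would also need to confirm that the conditional KL is jointly measurable in $\mathcal C_H$, so that the expectation $\mathcal L_H^{\mathrm{ideal}}$ is well defined.

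Two edge cases remain. If $\mathcal L_H^{\mathrm{ideal}}=\infty$, the bound is trivial. If $\mathcal L_H^{\mathrm{ideal}}<\infty$, then absolute continuity holds $\mu_H$-almost surely, and the chain rule applies without further conditions.
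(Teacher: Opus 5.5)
Your proof is correct and matches the paper's: the paper's argument is exactly your ``alternative route'' (Pinsker conditionally on $\mathcal C_H$, then averaging over $\mu_H$ to get $d_{\mathrm{TV}}(\overline Q_{\theta,H},\overline Q^*_{\phi,H})\le\mathbb E_{\mu_H}\sqrt{D_{\mathrm{KL}}(Q_{\theta,H}\|Q^{*,\mathrm{long}}_{\phi,H})/2}$, then Jensen, then the definition of total variation). Your primary route, a single Pinsker step on the joint laws plus the KL chain rule, is an equally valid rearrangement that the paper also notes in its path setup, where the shared $\mu_H$ gives $\mathcal L_H^{\mathrm{ideal}}=D_{\mathrm{KL}}(\overline Q_{\theta,H}\|\overline Q^*_{\phi,H})$; it avoids the mixture bound on total variation, while the paper's ordering gives the slightly sharper intermediate $\mathbb E_{\mu_H}\sqrt{\cdot}$ before Jensen, and both arrive at the same final bound.
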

\emph{Proof in Appendix~\ref{app:proof-within}.}
Taking $\mathcal A_H=\mathcal B_H(\gamma)$ bounds student drift by teacher drift plus the path-matching slack. Increasing $H$ therefore enlarges the delayed events covered by the guarantee, but does not automatically shrink the slack or control failures after $H$.

\subsection{Teacher-Guided Correction over Finite Depth}
\label{sec:horizon-comparison}

For a fixed evaluation stride $B$, let $\mathcal D$ be a measurable degraded region and $\mathcal G=\mathcal D^c$; when the drift detector is state-measurable, $\mathcal D$ can be its thresholded region from Eq.~\eqref{eq:drift-time}. Define
$b_j=\Pr(S_{jB}\in\mathcal D)$,
$a_j=\Pr(S_{(j+1)B}\in\mathcal D\mid S_{jB}\in\mathcal G)$, and
$\eta_j=\Pr(S_{(j+1)B}\in\mathcal G\mid S_{jB}\in\mathcal D)$.
Total probability gives
\begin{equation}
\label{eq:recursion-exact}
 b_{j+1}=(1-b_j)a_j+(1-\eta_j)b_j.
\end{equation}
Let $\lambda_j^*$ be teacher leakage on the student's on-policy good-start distribution, $\eta_j^*$ the corresponding teacher return from degraded starts, and $\epsilon_j^{\mathcal G},\epsilon_j^{\mathcal D}$ the clean $B$-step student--teacher path KLs on those starts. Conditional Pinsker yields
\begin{equation}
\label{eq:block-control}
a_j\le\lambda_j^*+\sqrt{\epsilon_j^{\mathcal G}/2},
\qquad
\eta_j\ge\left[\eta_j^*-\sqrt{\epsilon_j^{\mathcal D}/2}\right]_+.
\end{equation}

\begin{proposition}[Conditional finite-depth drift control]
\label{thm:recursion}
Suppose for $0\le j<J$ that $\lambda_j^*\le\bar\lambda$, $\eta_j^*\ge\bar\eta^*$, $\epsilon_j^{\mathcal G}\le\bar\epsilon_{\mathcal G}$, and $\epsilon_j^{\mathcal D}\le\bar\epsilon_{\mathcal D}$. Define
$A=\min\{1,\bar\lambda+\sqrt{\bar\epsilon_{\mathcal G}/2}\}$ and
$\underline\eta=[\bar\eta^*-\sqrt{\bar\epsilon_{\mathcal D}/2}]_+>0$. Then
\begin{equation}
\label{eq:recursion-bound}
b_j\le\min\!\left\{1,(1-\underline\eta)^j b_0+
\frac{A}{\underline\eta}\left[1-(1-\underline\eta)^j\right]\right\},
\quad 0\le j\le J.
\end{equation}
\end{proposition}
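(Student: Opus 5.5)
The argument combines the exact recursion \eqref{eq:recursion-exact}, the per-block bounds \eqref{eq:block-control} and the standing assumptions, and then runs an induction on $j$. The plan is to reduce \eqref{eq:recursion-exact} to a one-dimensional linear recursive inequality $b_{j+1}\le A+(1-\underline\eta)b_j$ and unroll it.

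First I fix $0\le j<J$. From \eqref{eq:block-control} and the hypotheses, $a_j\le\lambda_j^*+\sqrt{\epsilon_j^{\mathcal G}/2}\le\bar\lambda+\sqrt{\bar\epsilon_{\mathcal G}/2}$. Since $a_j$ is a conditional probability we also have $a_j\le1$, and together these give $a_j\le A$. In the same way, $\eta_j\ge[\eta_j^*-\sqrt{\epsilon_j^{\mathcal D}/2}]_+\ge[\bar\eta^*-\sqrt{\bar\epsilon_{\mathcal D}/2}]_+=\underline\eta$, because $x\mapsto[x]_+$ is monotone. Next I substitute into \eqref{eq:recursion-exact}. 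Both $1-b_j\ge0$ and $b_j\ge0$ hold, so
\begin{equation*}
b_{j+1}=(1-b_j)a_j+(1-\eta_j)b_j\le(1-b_j)A+(1-\underline\eta)b_j\le A+(1-\underline\eta)b_j .
\end{equation*}
The last step drops the term $-b_jA\le0$. Some conditional probabilities may be undefined when $b_j\in\{0,1\}$; in those cases the corresponding term vanishes and any value in $[0,1]$ may be assigned, so nothing changes.

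Now $0<\underline\eta\le1$, and therefore $1-\underline\eta\in[0,1)$. Induction on $j$ with the affine map $x\mapsto A+(1-\underline\eta)x$, which is nondecreasing, gives
\begin{equation*}
b_j\le(1-\underline\eta)^jb_0+A\sum_{i=0}^{j-1}(1-\underline\eta)^i .
\end{equation*}
The geometric sum equals $[1-(1-\underline\eta)^j]/\underline\eta$. The case $j=0$ is trivial. For the inductive step, I apply the one-step inequality and use the monotonicity of the affine map to insert the inductive bound. Finally, $b_j\le1$ holds because $b_j$ is a probability, which yields the $\min\{1,\cdot\}$ in \eqref{eq:recursion-bound}.

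The step that needs the most care is passing the hypotheses through \eqref{eq:block-control}. The leakage and return bounds there are stated on the student's own on-policy good- and degraded-start distributions at step $jB$, so the uniform assumptions must apply to those specific conditional distributions. As stated, the proposition assumes exactly this. I would check that \eqref{eq:block-control} itself is taken as given: it follows from conditional Pinsker, $|P(E)-P^*(E)|\le\sqrt{\mathrm{KL}/2}$ applied to the conditional path laws from each start set. The rest of the proof is elementary algebra.
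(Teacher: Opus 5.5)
Your proposal is correct and matches the paper's proof step for step: you obtain $a_j\le A$ and $\eta_j\ge\underline\eta$ from Eq.~\eqref{eq:block-control}, use the same monotone relaxation $b_{j+1}\le A(1-b_j)+(1-\underline\eta)b_j\le A+(1-\underline\eta)b_j$ (which avoids any sign condition on $1-A-\underline\eta$), unroll the geometric recursion, and cap the result at $1$. The one point you leave implicit is $\underline\eta\le 1$. It follows from $\underline\eta\le\bar\eta^*\le\eta_j^*\le 1$ whenever $J\ge 1$, and the case $J=0$ is trivial.
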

\emph{Proof in Appendix~\ref{app:proof-recursion}.}
Here $A$ controls newly injected degradation, while $\underline\eta$ lower-bounds student recovery using the teacher's return probability and the degraded-start path mismatch. Equation~\eqref{eq:recursion-bound} controls degraded-state occupancy at these block boundaries and is nontrivial when $A<\underline\eta$. These are sufficient conditions, not guarantees supplied automatically by LongForcing; feedback actions and a stronger conditional long-run extension are treated in Appendix~\ref{app:finite-depth-details}--\ref{app:long-run}.

\section{LongForcing}
\label{sec:method}

LongForcing builds on Causal Forcing~\citep{zhu2026causalforcing}. We retain its short-horizon teacher-forcing and causal ODE-distillation stages, and replace the real-score teacher in the final asymmetric DMD stage with an extended-horizon bidirectional teacher.
This teacher supervises longer student self-rollouts using the standard DMD formulation, motivated by the analysis in Section~\ref{sec:theory}.

\subsection{Long-Horizon Trajectory Matching}
\label{sec:longforcing}

\paragraph{Objective.}
With the long-horizon teacher as the reference, we apply the standard noisy DMD objective to an $H$-chunk student path, extending the short supervision horizon $W$:
\begin{equation}
\label{eq:longdmd}
    \mathcal{J}_H^{\mathrm{noisy}}
    =
    \mathbb{E}_{\mathcal C_H\sim\mu_H,\,s}
    \left[
        D_{\mathrm{KL}}\Big(
            Q^{(s)}_{\theta,H}(\cdot \mid \mathcal{C}_H)
            \,\Big\|\,
            Q^{*,(s)}_{\phi,H}(\cdot \mid \mathcal{C}_H)
        \Big)
    \right],
\end{equation}
where $Q^{(s)}_{\theta,H}$ and $Q^{*,(s)}_{\phi,H}$ are the student and teacher path distributions at shared noise level $s$.
As in asymmetric DMD, the student generates causally and both score networks evaluate its noised path bidirectionally. The longer scoring horizon covers later prefixes; Appendix~\ref{app:longforcing-objective}--\ref{app:dmd-estimator} gives the objective's scope and score estimator.

\paragraph{Extended-horizon teacher.}
The frozen real-score model is trained or fine-tuned on long sequences so that it can score the complete $H$-step path; the online fake-score model uses the same long bidirectional context to track the current student distribution.
For interactive training, a pre-specified action plan is supplied over the same horizon.
The extended horizon increases supervision coverage; the guarantees remain conditional on teacher quality and student--teacher mismatch.

\paragraph{Algorithm.}
Algorithm~\ref{alg:longforcing} summarizes the pipeline. In Stage~3, the long-horizon real-score teacher supervises the longer self-generated path.

\begin{algorithm}[H]
\caption{LongForcing training}
\label{alg:longforcing}
\begin{algorithmic}[1]
\Require short horizon $W$; long horizon $H$; extended teacher $s_{\mathrm{real}}$
\State Train a causal model by teacher forcing on $W$-step real clips.
\State Use this causal model as the ODE teacher to initialize $G_\theta$ on $W$-step clips.
\State Initialize the online model $s_{\mathrm{fake}}$ from $s_{\mathrm{real}}$.
\For{each LongForcing iteration}
    \State Sample $\mathcal C_H$ and roll out $\mathbf z_{1:H}\sim Q_{\theta,H}(\cdot\mid\mathcal C_H)$ on policy.
    \State Sample $s$ and noise the full path to obtain $\mathbf z_{1:H}^{(s)}$.
    \State Evaluate $s_{\mathrm{real}}$ and $s_{\mathrm{fake}}$ bidirectionally on $\mathbf z_{1:H}^{(s)}$.
    \State Update $G_\theta$ using the DMD estimator and $s_{\mathrm{fake}}$ on their respective schedules (Appendix~\ref{app:training-pipeline}).
\EndFor
\State \Return few-step causal student $G_\theta$
\end{algorithmic}
\end{algorithm}

\section{Experiments}
\label{sec:experiments}
\subsection{Setup}
\label{sec:exp-setup}

\paragraph{Interactive world model.}
We train action-conditioned models on Wan2.2-5B~\citep{wan2025wan22repo} and Wan2.1-1.3B~\citep{wan2025wan} and evaluate 60-second WorldRoamBench rollouts: third-person for the 5B comparison and first-person for the 1.3B horizon ablation. Teacher forcing and ODE distillation use 121 and 81 frames for the 5B and 1.3B models, respectively, while DMD extends both to 321 frames. DMD training uses AdamW on eight GPUs with a global batch size of 8 for 1000 steps. The generator and fake-score network use constant learning rates of $2\times10^{-6}$ and $4\times10^{-7}$, respectively, with $(\beta_1,\beta_2)=(0,0.999)$ and weight decay $0.01$.

\paragraph{Text-to-video model.}
For text-to-video, we follow Causal Forcing~\citep{zhu2026causalforcing} on Wan2.1-1.3B~\citep{wan2025wan} at $832{\times}480$ resolution. We retain its teacher-forcing and causal ODE configurations, replace the DMD real-score teacher, and use 321-frame DMD rollouts. DMD uses AdamW on 8 GPUs with a global batch size of 8 and learning rates of $2{\times}10^{-6}$ and $4{\times}10^{-7}$ for the generator and fake-score network. We evaluate with the official 5-second VBench protocol~\citep{huang2024vbench} and VBench-Long~\citep{huang2025vbenchpp} on 30-second generations. See Appendix~\ref{app:training-pipeline} for further details.

\subsection{Main Results}
\label{sec:main-results}

\paragraph{Interactive world model.}
Table~\ref{tab:main-world} reports control-rate-weighted scores for 60-second third-person rollouts. Scores incorporate control rate as defined in the table caption. LongForcing-5B achieves weighted action and visual scores of 76.81 and 57.89, the highest among the open-source models shown. Its control rate is 87.88\% (87/99), compared with 40.40\% for LingBot-World v2.

The 47.48-percentage-point control-rate advantage over LingBot-World v2 highlights the importance of sustained action responsiveness in the weighted evaluation.
The scores therefore reflect controllability and visual quality jointly, rather than visual quality alone.
Achieving this result with a 5B model against 8B and 14B baselines also suggests that parameter count alone does not determine long-rollout performance; Section~\ref{sec:ablation} examines supervision configurations within a fixed backbone.

\begin{table}[H]
\centering
\caption{\textbf{WorldRoamBench results.}
Except Ctrl., higher-is-better scores are multiplied by $p=\mathrm{Ctrl.}/100$; Aesth. Drift is $100[1-p(1-d)]$ for normalized drift $d\in[0,1]$. Best and second-best open-source results are bold and underlined.}
\label{tab:main-world}
\scriptsize
\setlength{\tabcolsep}{2.6pt}
\resizebox{\textwidth}{!}{%
\begin{tabular}{@{}lccccccccc@{}}
\toprule
\multirow{2}{*}{Model} & \multirow{2}{*}{Size} & \multicolumn{4}{c}{WASD Action Interaction} & \multicolumn{4}{c}{Visual Quality} \\
\cmidrule(lr){3-6}\cmidrule(lr){7-10}
& & Action $\uparrow$ & Ctrl. $\uparrow$ & Partial Acc. $\uparrow$ & Traj. $\uparrow$ & Visual $\uparrow$ & Aesthetic $\uparrow$ & Aesth. Drift $\downarrow$ & Imaging $\uparrow$ \\
\midrule
\multicolumn{10}{@{}l}{\textit{Proprietary Models}} \\
Genie 3~\citep{google2025genie3} & -- & 69.73 & 79.80 & 65.75 & 73.70 & 56.26 & 48.11 & 32.08 & 45.30 \\
HappyOyster~\citep{alibaba2026happyoyster} & -- & 84.02 & 96.97 & 80.48 & 87.57 & 65.63 & 59.10 & 20.61 & 47.44 \\
\midrule
\multicolumn{10}{@{}l}{\textit{Open-Source Models}} \\
LingBot-World~\citep{robbyant2026lingbotworld} & 14B & 18.81 & 23.23 & 18.50 & 19.12 & 15.99 & 14.32 & 81.70 & 13.84 \\
HY-World 1.5~\citep{tencent2025hyworld15} & 8B & 24.92 & 29.29 & 24.57 & 25.28 & 19.11 & 17.52 & 79.48 & 16.52 \\
LingBot-World v2~\citep{gao2026lingbotworldv2} & 14B & \underline{36.09} & \underline{40.40} & \underline{35.35} & \underline{36.84} & \underline{28.42} & \underline{25.61} & \underline{68.75} & \underline{25.04} \\
\midrule
\textbf{LongForcing (ours)} & 5B & \textbf{76.81} & \textbf{87.88} & \textbf{73.74} & \textbf{79.89} & \textbf{57.89} & \textbf{51.69} & \textbf{31.93} & \textbf{48.26} \\
\bottomrule
\end{tabular}
}
\end{table}

\paragraph{Text-to-video: 5-second VBench.}
Table~\ref{tab:main-t2v} reports 5-second VBench results.
At matched inference cost, LongForcing leads distilled autoregressive models with 84.56 Total and 85.52 Quality, improving over Causal Forcing by 0.52 and 0.93 points, respectively.
These results suggest that longer-horizon supervision also benefits short-clip visual quality, extending its utility beyond long-rollout stability.

\paragraph{Text-to-video: 30-second VBench-Long.}
Table~\ref{tab:main-t2v-long} reports 30-second VBench-Long results, where LongForcing achieves the highest Total and Quality scores among the compared methods, at 82.83 and 83.99, improving over Causal Forcing by 1.78 and 0.55 points, respectively. The full results in Appendix~\ref{app:full-vbench-results} show the highest background consistency score of 96.58. Semantic alignment also improves from 71.52 to 78.19, complementing the gains in visual quality and background consistency.

\begin{table}[H]
\centering
\begin{minipage}[t]{0.48\textwidth}
\centering
\captionof{table}{\textbf{5-second VBench.}}
\label{tab:main-t2v}
\small
\setlength{\tabcolsep}{4.5pt}
\resizebox{\linewidth}{!}{%
\begin{tabular}{@{}lccc@{}}
\toprule
Model & Total $\uparrow$ & Quality $\uparrow$ & Semantic $\uparrow$ \\
\midrule
CausVid~\citep{yin2025causvid} & 81.33 & 83.98 & 70.72 \\
Self Forcing~\citep{huang2025selfforcing} & 83.74 & 84.48 & \underline{80.77} \\
Causal Forcing~\citep{zhu2026causalforcing} & \underline{84.04} & \underline{84.59} & \textbf{81.84} \\
\textbf{LongForcing (ours)} & \textbf{84.56} & \textbf{85.52} & 80.72 \\
\bottomrule
\end{tabular}
}
\end{minipage}
\hfill
\begin{minipage}[t]{0.48\textwidth}
\centering
\captionof{table}{\textbf{30-second VBench-Long.}}
\label{tab:main-t2v-long}
\small
\setlength{\tabcolsep}{4.5pt}
\resizebox{\linewidth}{!}{%
\begin{tabular}{@{}lccc@{}}
\toprule
Model & Total $\uparrow$ & Quality $\uparrow$ & Semantic $\uparrow$ \\
\midrule
LongLive~\citep{yang2025longlive} & \underline{82.80} & 83.21 & \textbf{81.17} \\
Self Forcing~\citep{huang2025selfforcing} & 81.49 & 82.67 & 76.77 \\
Causal Forcing~\citep{zhu2026causalforcing} & 81.05 & \underline{83.44} & 71.52 \\
\textbf{LongForcing (ours)} & \textbf{82.83} & \textbf{83.99} & \underline{78.19} \\
\bottomrule
\end{tabular}
}
\end{minipage}
\end{table}

\subsection{Ablation Studies}
\label{sec:ablation}

\paragraph{Text-to-video supervision horizon.}
Table~\ref{tab:abl-horizon} compares supervision-horizon configurations from 81 to 321 frames (approximately 5 to 20 seconds). Each DMD rollout length uses its corresponding separately trained teacher checkpoint.
The 321-frame setting achieves the highest 5-second total score and 30-second total, quality, and semantic scores.

\begingroup
\setlength{\intextsep}{6pt plus 2pt minus 2pt}
\begin{table}[H]
\centering
\caption{\textbf{Supervision-horizon configurations for text-to-video generation.}
Horizons are rollout lengths in frames. Best and second-best scores are \textbf{bold} and \underline{underlined}.
}
\label{tab:abl-horizon}
\setlength{\tabcolsep}{3.8pt}
\small
\begin{tabular}{@{}cc c cccc@{}}
\toprule
\multirow{2}{*}{Frames} & \multirow{2}{*}{Approx. span} & \multirow{2}{*}{\shortstack{VBench (5s)\\Total $\uparrow$}} & \multicolumn{4}{c}{VBench-Long (30s) $\uparrow$} \\
\cmidrule(l){4-7}
& & & Total & Quality & Semantic & Dynamic \\
\midrule
81  & $\sim$5s  & 84.36 & \underline{81.87} & 83.71 & \underline{74.54} & 58.15 \\
161 & $\sim$10s & 83.76 & 78.02 & 81.59 & 63.77 & 56.67 \\
241 & $\sim$15s & \underline{84.38} & 79.18 & \underline{83.75} & 60.87 & \textbf{94.91} \\
321 & $\sim$20s & \textbf{84.56} & \textbf{82.83} & \textbf{83.99} & \textbf{78.19} & \underline{60.93} \\
\bottomrule
\end{tabular}
\end{table}

\paragraph{Interactive world-model supervision horizon.}
In Table~\ref{tab:abl-world-horizon}, extending the DMD supervision horizon from 81 to 321 frames raises action score from 64.39 to 88.26 and strict accuracy from 42.89 to 85.16 on 60-second rollouts, while reducing aesthetic drift from 56.58 to 42.37. The 321-frame setting achieves the best results across all action metrics and most visual metrics, supporting the benefit of longer supervision in exposing accumulated control and consistency errors.

\begin{table}[H]
\centering
\caption{\textbf{Effect of the DMD supervision horizon on first-person interactive world modeling.}
The comparison varies the DMD supervision horizon with a fixed 1.3B backbone under the first-person WorldRoamBench 60s protocol; scores are unweighted.
Higher is better except for Aesthetic Drift.
The best and second-best results are shown in \textbf{bold} and \underline{underlined}, respectively.
}
\label{tab:abl-world-horizon}
\scriptsize
\setlength{\tabcolsep}{2.4pt}
\resizebox{\textwidth}{!}{%
\begin{tabular}{@{}cccccccccc@{}}
\toprule
\multirow{2}{*}{Frames} & \multirow{2}{*}{Approx. span} & \multicolumn{4}{c}{WASD Action Interaction} & \multicolumn{4}{c}{Visual Quality} \\
\cmidrule(lr){3-6}\cmidrule(lr){7-10}
& & Action $\uparrow$ & Strict Acc. $\uparrow$ & Partial Acc. $\uparrow$ & Traj. $\uparrow$ & Visual $\uparrow$ & Aesthetic $\uparrow$ & Aesth. Drift $\downarrow$ & Imaging $\uparrow$ \\
\midrule
81  & $\sim$5s  & 64.39 & 42.89 & 63.01 & \underline{87.27} & 45.05 & 39.81 & 56.58 & 46.40 \\
161 & $\sim$10s & 56.16 & 43.21 & 50.00 & 75.28 & 47.01 & 41.06 & 60.18 & \textbf{52.29} \\
241 & $\sim$15s & \underline{83.39} & \underline{78.24} & \underline{85.14} & 86.78 & \underline{55.19} & \underline{49.93} & \underline{43.30} & 50.25 \\
321 & $\sim$20s & \textbf{88.26} & \textbf{85.16} & \textbf{90.56} & \textbf{89.05} & \textbf{56.30} & \textbf{51.61} & \textbf{42.37} & \underline{50.84} \\
\bottomrule
\end{tabular}
}
\end{table}

\paragraph{Time-resolved rollout drift.}
Figure~\ref{fig:drift-human-study} shows how text-to-video generation deteriorates as rollouts grow longer. Extending supervision from 81 to 321 frames reduces the mean drift score at 59 seconds from 67.46 to 31.91, a relative reduction of 52.7\%. The gap between the curves grows over time, indicating that longer supervision slows error accumulation and helps maintain quality during extended generation, although some drift remains. Appendix~\ref{app:drift-score} provides the metric definition, evaluation protocol, and corresponding interactive results.

\begin{figure}[H]
    \centering
    \includegraphics[width=\textwidth]{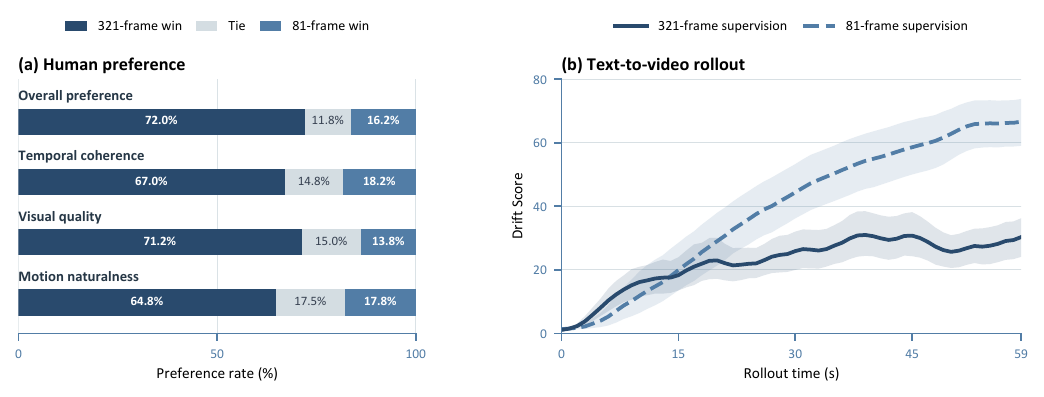}
    \caption{\textbf{Rollout drift and human preference across supervision horizons.}
    \textbf{(a)} Blinded pairwise preferences.
    \textbf{(b)} Mean Drift Score.}
    \label{fig:drift-human-study}
\end{figure}
\endgroup

\paragraph{Human study.}
Across 400 judgments per criterion from 20 evaluators on 20 shared cases, 321-frame LongForcing receives 72.0\% overall preference, compared with 16.2\% for the 81-frame model and 11.8\% ties (Figure~\ref{fig:drift-human-study}).
Preferences also favor longer supervision in visual quality by 71.2\% to 13.8\%, temporal coherence by 67.0\% to 18.2\%, and motion naturalness by 64.8\% to 17.8\%.
Appendix~\ref{app:human-study} gives the evaluation protocol.

\subsection{Qualitative Results}
\label{sec:qualitative}
Figure~\ref{fig:qualitative-interactive}A--D shows recognizable scenes and characters across snapshots spanning up to 54 minutes.
In the 600-second comparisons (E--F), LingBot-World v2 (14B)~\citep{gao2026lingbotworldv2} exhibits over-saturation and repetitive textures (E), and scene and character appearance shifts (F), while LongForcing-5B retains more of the initial scene structure and visual style.
These comparisons assess visual consistency at the displayed timestamps.
\begin{figure}[H]
    \centering
    \includegraphics[width=0.96\textwidth]{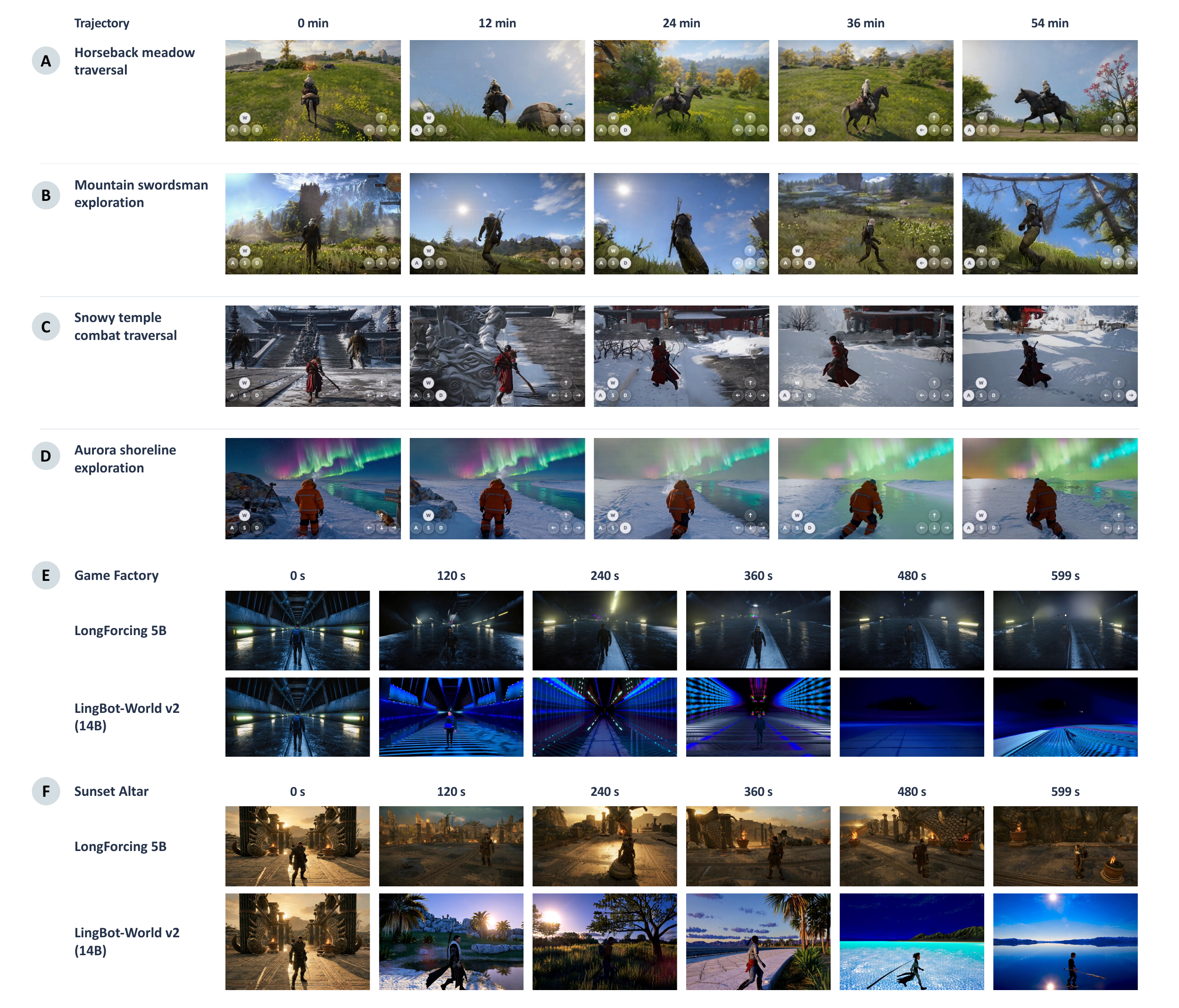}
    \caption{A--D: LongForcing rollouts over up to 54 minutes. E--F: LongForcing-5B (top) and LingBot-World v2 (14B; bottom) at matched timestamps from 0 to 599 seconds.}
    \label{fig:qualitative-interactive}
\end{figure}

\section{Conclusion}

We identify a \emph{supervision-horizon gap} in autoregressive video generation: short-horizon supervision leaves later rollout behavior underdetermined, while longer on-policy trajectories expose delayed failures to teacher guidance.
LongForcing addresses this gap by replacing Causal Forcing's DMD real-score teacher with an extended-horizon teacher that supervises longer student self-rollouts, while retaining its short teacher-forcing and causal ODE-distillation stages.
Across text-to-video generation and interactive world models, this improves VBench Quality by 0.93 points and first-person action score by 23.9 points over the respective baselines, without changing the architecture or increasing inference cost.
Horizon ablations and time-resolved evaluations show reduced rollout drift, complemented by qualitative interactive stress tests sampled up to 24 hours.
These findings position temporal supervision as a scaling dimension for video world models.
Integrating coherent visual rollouts with persistent memory and hierarchical planning offers a path toward persistent interactive worlds, goal-directed behavior, and long-term reasoning.

\bibliographystyle{unsrt}
\bibliography{references}

\input{appendix}

\end{document}

%% file: appendix.tex
\clearpage
\appendix
\setcounter{figure}{0}
\setcounter{equation}{0}
\makeatletter
\@addtoreset{figure}{section}
\@addtoreset{equation}{section}
\makeatother
\renewcommand{\thefigure}{\Alph{section}\arabic{figure}}
\renewcommand{\theHfigure}{\Alph{section}.\arabic{figure}}
\renewcommand{\theequation}{\Alph{section}\arabic{equation}}
\renewcommand{\theHequation}{\Alph{section}.\arabic{equation}}

\section{Implementation Details}

\subsection{Training Pipeline}
\label{app:training-pipeline}

LongForcing retains the teacher-forcing and causal ODE-distillation stages of Causal Forcing and replaces the real-score teacher in its final DMD stage with an extended-horizon bidirectional teacher.
Teacher forcing uses short clean histories and causal ODE distillation uses short trajectories from the causal teacher; the resulting student generates a contiguous $H$-chunk closed-loop path, scored jointly by the replacement teacher and the online fake-score network.
The student architecture, sampler, and inference procedure remain unchanged.
Under the causal VAE mapping, 81, 161, 241, and 321 decoded RGB frames correspond to 21, 41, 61, and 81 temporal latent positions, respectively; these positions are rounded and grouped into autoregressive chunks of three latent frames.

\paragraph{Interactive teacher: initialization and data.}
The interactive teachers use the pretrained Wan2.2-5B and Wan2.1-1.3B backbones, adapted to action-conditioned video generation on the ABot-World training data~\citep{jiang2026abotworld}.
We train the interactive teachers for 20,000 steps with a learning rate of $10^{-5}$ and a global batch size of 256 video clips.
We use action-annotated video sequences, with the action sequence aligned to the video over the teacher's temporal window.
The teacher retains bidirectional temporal attention so that it can evaluate the complete student-generated path with its corresponding actions.
The data collection and annotation pipeline is described in ABot-World~\citep{jiang2026abotworld}; the long teacher provides the frozen real-score network for our 321-frame DMD stage.

\paragraph{Text-to-video teacher: long-video adaptation.}
We initialize the bidirectional teacher from the pretrained Wan2.1-T2V-1.3B checkpoint~\citep{wan2025wan} and fine-tune it on UltraVideo-Long~\citep{xue2025ultravideo} for three epochs, with each training example visited once per epoch.
Videos are sampled at 16 FPS and processed at $832\times480$ resolution, using 321-frame clips for the long teacher.
The horizon study uses separate teacher checkpoints for the 81-, 161-, 241-, and 321-frame DMD configurations, pairing each rollout length with its corresponding teacher.
The pretrained Wan VAE and UMT5 text encoder remain frozen; video latents and text embeddings are cached before optimization.
We optimize a supervised flow-matching loss on the noisy video latents, sampling a diffusion timestep and Gaussian noise anew for each example and applying the scheduler's timestep-dependent loss weight.
Adaptation uses rank-64 LoRA with scaling parameter $\alpha=64$ in the DiT attention projections (\texttt{q,k,v,o}) and feed-forward projections (\texttt{ffn.0,ffn.2}); the underlying pretrained weights remain frozen.
AdamW uses a base learning rate of $10^{-4}$, weight decay $0.01$, and its default momentum parameters $(0.9,0.999)$.
The implementation uses BF16 model parameters, evaluates the mean-squared-error loss in FP32, enables gradient checkpointing, and processes one clip per GPU without gradient accumulation, giving a global batch size of 32 clips.
After adaptation, the resulting teacher is frozen during DMD; only the causal generator and the online fake-score network are updated in that stage.

\paragraph{Resources and DMD configuration.}
We follow Causal Forcing~\citep{zhu2026causalforcing} for DMD timestep sampling and loss weighting.
LongForcing extends the real-score teacher's supervision horizon and the student rollout length.
Interactive teacher training uses 256 GPUs, while text-to-video teacher training uses 32 NVIDIA H20 GPUs.
The subsequent DMD stage runs for 1,000 global steps on eight GPUs with a global batch size of eight trajectories in both text-to-video and interactive experiments.
The generator and fake-score learning rates are $2\times10^{-6}$ and $4\times10^{-7}$, respectively, with AdamW momentum parameters $(0,0.999)$ and weight decay $0.01$.
For text-to-video DMD training, the fake-score network is updated at every global step, whereas the generator is updated once every five global steps, giving a $5{:}1$ ratio of fake-score to generator updates.
For both the 1.3B and 5B interactive world models, the fake-score network is updated at every global step and the generator once every three global steps during DMD training.

\subsection{Ideal Objective and Practical Scope}
\label{app:longforcing-objective}

The ideal clean objective assigns weight only to prefixes reached inside the $H$-step training rollout.
Let $q_{\theta,t}$ and $q^*_{\phi,t}$ denote the student and teacher chain-rule conditionals induced by their respective joint path distributions.
Then
\begin{equation}
\label{eq:path-chain-rule}
    \mathcal L_H^{\mathrm{ideal}}
    =\sum_{t=1}^H
    \mathbb E_{\substack{\mathcal C_H\sim\mu_H,\\
                    \mathbf z_{<t}\sim Q_{\theta,H}}}
    D_{\mathrm{KL}}\!\left(
        q_{\theta,t}(\cdot\mid\mathbf z_{<t},\mathcal C_H)
        \,\middle\|\,
        q^*_{\phi,t}(\cdot\mid\mathbf z_{<t},\mathcal C_H)
    \right).
\end{equation}
The expectation is under the student's on-policy prefix distribution; the teacher conditional is derived from its joint distribution and need not be implemented as a causal model.
Increasing $H$ gives delayed prefixes in Eq.~\eqref{eq:exposed-mass} nonzero weight, whereas prefixes first reached after $H$ remain outside the objective.

Equation~\eqref{eq:ideal} describes clean-path matching, while Eq.~\eqref{eq:longdmd} defines its noisy-path counterpart used to motivate DMD. Appendix~\ref{app:dmd-estimator} states the conditions under which the score-difference estimator is unbiased for the noisy objective. The clean-path bounds in Section~\ref{sec:theory} apply when their stated KL conditions hold; extensions to later blocks use the conditions in Appendix~\ref{app:finite-depth-details}.
For the interactive world model, training uses an action plan specified before the matched path and supplied to the teacher over the same horizon; extending the theory to state-feedback actions requires a joint state--action reference distribution.

Long-horizon DMD increases training memory and compute because the student unrolls farther and both score networks process a longer path.
At fixed spatial resolution, dense bidirectional attention has quadratic attention-score computation in temporal sequence length; total memory and runtime also depend on the implementation.
For a fixed output length and inference configuration, LongForcing retains the student architecture and sampling procedure, adding no inference operations.

\subsection{Comparison of Long-Horizon Rollout Strategies}
\label{app:method-comparison}

Figure~\ref{fig:method-comparison} contrasts the supervision coverage and inference mechanisms of representative long-horizon rollout strategies.
The comparison distinguishes where teacher supervision is applied from mechanisms that modify inference-time conditioning.

\begin{figure}[H]
    \centering
    \includegraphics[width=\linewidth]{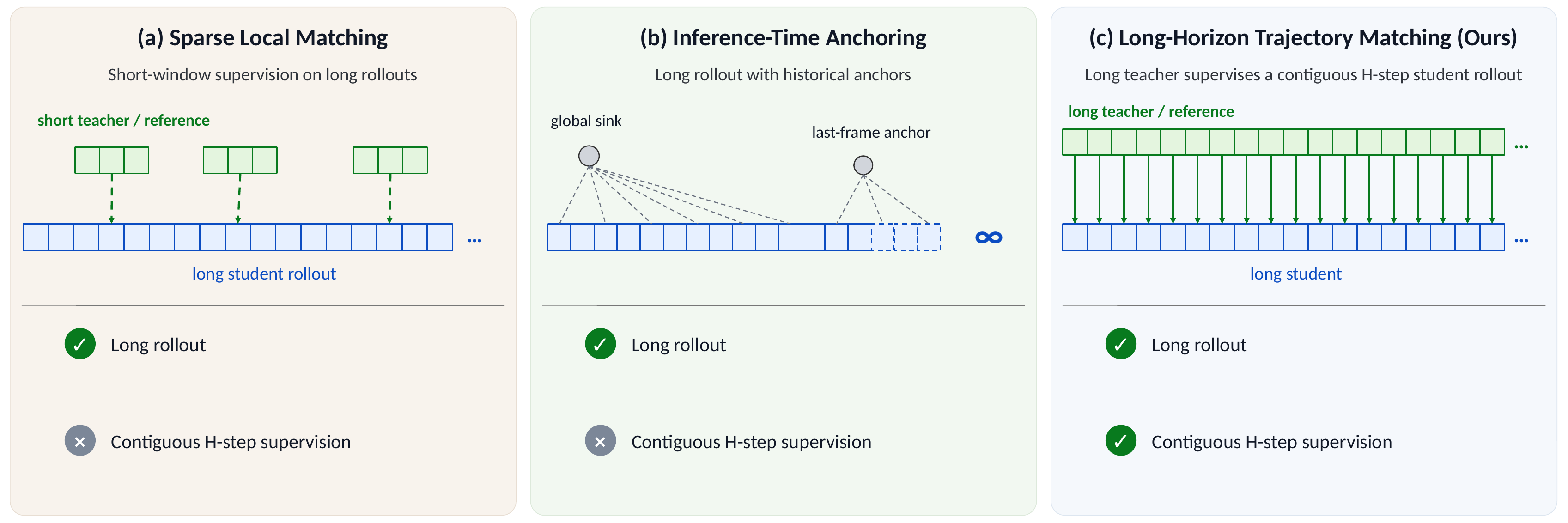}
    \caption{Mechanistic comparison of sparse local matching, inference-time anchoring, and LongForcing's contiguous $H$-step on-policy trajectory matching under an extended-horizon teacher.}
    \label{fig:method-comparison}
\end{figure}

\subsection{DMD Score-Difference Estimator}
\label{app:dmd-estimator}

For a shared condition $\mathcal C_H$, let $\mathbf z=G_\theta(\boldsymbol\xi;\mathcal C_H)$ be a clean student path and $\mathbf y=\alpha_s\mathbf z+\sigma_s\boldsymbol\epsilon$ its noised version. With $s_{\rm real}$ and $s_{\rm fake}$ denoting estimates of the teacher and student noisy scores, the unweighted, full-Jacobian score-difference estimator is
\begin{equation}
\label{eq:dmd}
\widehat{\mathbf g}_{\rm DMD}
=\alpha_s\left(\frac{\partial\mathbf z}{\partial\theta}\right)^{\!\top}
\operatorname{sg}\!\left[s_{\rm fake}(\mathbf y,s)-s_{\rm real}(\mathbf y,s)\right].
\end{equation}
Equation~\eqref{eq:dmd} presents the ideal, unweighted estimator; our implementation follows the sampling and weighting scheme of Causal Forcing~\citep{zhu2026causalforcing}.
Here $\operatorname{sg}$ treats the score difference as fixed during the generator update; condition arguments are suppressed. The factor $\alpha_s$ follows from differentiating the forward-noising map. For a condition and noise-level sampling law independent of $\theta$, exact scores, and full rollout differentiation, the expectation equals $\nabla_\theta\mathcal J_H^{\rm noisy}$, assuming differentiable positive noisy densities and valid interchange of differentiation and integration. In particular, the expected explicit derivative of the student log-density vanishes by normalization, leaving the pathwise score difference. Stop-gradient implements this expression without differentiating the score networks; score estimation, reweighting, normalization, or rollout-gradient truncation can introduce approximations to this ideal update.
Both score networks attend bidirectionally over the sampled path: the real-score network is frozen during DMD, whereas the fake-score network is updated online to track the student's rollout distribution.

\section{Theoretical Details and Proofs}
\label{app:theory}

\subsection{Formal Path Setup}
\label{app:path-setup}

All state, action, and latent spaces are taken to be standard Borel spaces, so the conditional path laws below exist. We use natural logarithms in KL and $d_{\mathrm{TV}}(P,Q)=\sup_A|P(A)-Q(A)|$.
Let $S_t=\Gamma(S_{t-1},\mathbf z_t)$ be the student's causal state after chunk $t$, including any finite context or cache required for the next prediction; fixed conditions and model parameters are suppressed in $\Gamma$.
All event comparisons use the same measurable path-to-state map and detector for student and teacher paths, with the model parameters held fixed.
For matched horizon $H$, define
$\mathcal C_H=(S_0,\mathbf a_{1:H},\mathbf c)$, where $\mathbf a_{1:H}$ is fixed before the window and $\mathbf c$ contains persistent conditions.
The causal student induces
\begin{equation}
Q_{\theta,H}(\mathbf z_{1:H}\mid\mathcal C_H)
=\prod_{t=1}^{H}p_\theta(\mathbf z_t\mid S_{t-1},\mathbf a_{1:t},\mathbf c),
\end{equation}
whereas the bidirectional teacher defines the joint reference $Q^{*,\mathrm{long}}_{\phi,H}(\mathbf z_{1:H}\mid\mathcal C_H)$ without requiring a causal implementation.
Fixing the actions open-loop prevents conditioning on actions generated from the same path; feedback policies instead require a joint state--action reference law.

For $\mathcal C_H\sim\mu_H$, the joint measures are $\overline Q_{\theta,H}=\mu_HQ_{\theta,H}$ and $\overline Q^*_{\phi,H}=\mu_HQ^{*,\mathrm{long}}_{\phi,H}$.
Because they share $\mu_H$, Eq.~\eqref{eq:ideal} is equivalently
$D_{\mathrm{KL}}(\overline Q_{\theta,H}\|\overline Q^*_{\phi,H})$.
For the exposure comparison, both events are evaluated on the same rollout law with the same initial-condition distribution and fixed detector. Identifying a shorter event with its extension to the longer path space gives $\mathcal B_{H_1}(\gamma)\subseteq\mathcal B_{H_2}(\gamma)$ for $H_1<H_2$. This construction does not require independently trained teachers at different horizons to have consistent marginals.

\subsection{Non-Identifiability and Delayed Exposure}
\label{app:proof-indistinguishable}

\begin{proof}[Proof of Proposition~\ref{prop:indistinguishable}]
For $b\in\{0,1\}$, define the uniform binary path law
\begin{equation}
Q_T^{(b)}=\operatorname{Unif}\!\left\{x_{1:T}\in\{0,1\}^{T}:
\sum_{t=1}^{T}x_t\equiv b\pmod 2\right\}.
\end{equation}
Thus $Q_T^{(0)}$ assigns mass to even-parity paths and $Q_T^{(1)}$ to odd-parity paths.
Both admit causal realizations: sample the first $T-1$ bits independently from $\operatorname{Bernoulli}(1/2)$, then set the last bit to enforce the prescribed parity.
For any coordinate set $I$ with $|I|=k<T$, fixing $x_I$ leaves exactly $2^{T-k-1}$ completions of either parity. Each full-path support has size $2^{T-1}$, hence
\begin{equation}
Q_T^{(0)}(x_I)=Q_T^{(1)}(x_I)=2^{-k}.
\end{equation}
In particular, every contiguous $W<T$ window has the same uniform marginal under both laws. Their full-path supports are disjoint, giving $d_{\mathrm{TV}}(Q_T^{(0)},Q_T^{(1)})=1$.
\end{proof}

\paragraph{Concrete example: three steps and two-step windows.}
For $T=3$ and $W=2$, the two laws are
\begin{equation}
\begin{aligned}
Q_3^{(0)}&=\operatorname{Unif}\{000,011,101,110\},\\
Q_3^{(1)}&=\operatorname{Unif}\{001,010,100,111\}.
\end{aligned}
\end{equation}
Under either law, both windows $(x_1,x_2)$ and $(x_2,x_3)$ assign probability $1/4$ to each of $00,01,10,11$. A discriminator restricted to either window therefore cannot distinguish the laws, while the full-path event $\{x_1+x_2+x_3\text{ is even}\}$ has probability one under $Q_3^{(0)}$ and zero under $Q_3^{(1)}$.
The missing information is the coupling across windows: local agreement alone leaves a global path constraint undetermined.

The proposition concerns objectives defined by local-window marginals; exact matching of every causal conditional on its full support would determine the joint path.
For a measurable region $\mathcal R$, let $\tau_{\mathcal R}=\inf\{t:S_t\in\mathcal R\}$ and $h_{\mathcal R}(H)=\Pr(\tau_{\mathcal R}\le H)$.
With $N$ independent horizon-$H$ rollouts,
\begin{equation}
\Pr(\mathcal R\text{ is never exposed})=(1-h_{\mathcal R}(H))^N.
\end{equation}
If $h_{\mathcal R}(H_1)=0$, more horizon-$H_1$ samples cannot expose $\mathcal R$; sequential depth cannot be replaced by sample count.
This is an exposure statement for a fixed rollout distribution, not a monotonic improvement claim across separately trained students.

\subsection{Within-Horizon Path Control}
\label{app:proof-within}

\begin{proof}[Proof of Theorem~\ref{thm:within}]
Conditionally on $\mathcal C_H$, Pinsker's inequality gives
\begin{equation}
d_{\mathrm{TV}}\!\left(Q_{\theta,H},Q^{*,\mathrm{long}}_{\phi,H}\right)
\le\sqrt{\tfrac12 D_{\mathrm{KL}}\!\left(Q_{\theta,H}\|Q^{*,\mathrm{long}}_{\phi,H}\right)}.
\end{equation}
Average over $\mu_H$ and apply Jensen's inequality to obtain
$d_{\mathrm{TV}}(\overline Q_{\theta,H},\overline Q^*_{\phi,H})\le\sqrt{\mathcal L_H^{\mathrm{ideal}}/2}$.
The event bound follows from the definition of total variation.
\end{proof}

The same argument controls any measurable $F_H\in[0,1]$:
\begin{equation}
\left|\mathbb E_{\overline Q_{\theta,H}}F_H-\mathbb E_{\overline Q^*_{\phi,H}}F_H\right|
\le\sqrt{\mathcal L_H^{\mathrm{ideal}}/2}.
\end{equation}
Taking $F_H=H^{-1}\sum_{t=1}^H\mathbf1\{S_t\in\mathcal R\}$ yields the finite-horizon occupancy bound without assuming ergodicity.
Indeed, $F_H=\int_0^1\mathbf1\{F_H>u\}\,du$; integrating the event bound for $\{F_H>u\}$ proves the bounded-functional inequality above.
The next subsection extends the analysis to later block boundaries under explicit leakage and recovery conditions.

\subsection{Finite-Depth Drift Control}
\label{app:finite-depth-details}
\label{app:proof-recursion}

For an open-loop action block, define the complete condition
\begin{equation}
X_j=(S_0,\mathbf z_{1:jB},\mathbf a_{1:(j+1)B},\mathbf c),
\qquad \nu_j=\operatorname{Law}_{Q_\theta}(X_j),
\end{equation}
where the future suffix $\mathbf a_{jB+1:(j+1)B}$ is fixed before the block.
If the block is a conditional of a matched $H$-step joint path, use $X_j=(\mathcal C_H,\mathbf z_{1:jB})$ instead, retaining the entire teacher condition, including actions after the block.
The full prefix may be replaced only by an augmented state sufficient for both student and teacher path laws.
Let $\rho_j^{\mathcal G}=\nu_j(\cdot\mid S_{jB}\in\mathcal G)$ and $\rho_j^{\mathcal D}=\nu_j(\cdot\mid S_{jB}\in\mathcal D)$ when the corresponding events have positive mass; otherwise their conditional values are immaterial in Eq.~\eqref{eq:recursion-exact}.

For each start $x$, the teacher reference is a specified measurable continuation kernel conditioned on that same prefix and action block. Within a matched joint path, it is the corresponding chain-rule conditional; a continuation outside that horizon requires a separately specified reference.
Let $\lambda_j^*$ be the teacher's average terminal probability of $\mathcal D$ from $\rho_j^{\mathcal G}$ and $\eta_j^*$ its average terminal probability of $\mathcal G$ from $\rho_j^{\mathcal D}$.
Define
\begin{equation}
\epsilon_j^{\mathcal G}=\mathbb E_{x\sim\rho_j^{\mathcal G}}
D_{\mathrm{KL}}\!\left(Q_{\theta,B}(\cdot\mid x)\,\middle\|\,Q^{*,\mathrm{long}}_{\phi,B}(\cdot\mid x)\right),
\end{equation}
and define $\epsilon_j^{\mathcal D}$ analogously under $\rho_j^{\mathcal D}$.
The finite-KL assumptions entail student-to-teacher absolute continuity almost everywhere under the respective start distribution.
Write $P_j^x$ and $R_j^x$ for these student and teacher block laws, and $K_j(x)=D_{\mathrm{KL}}(P_j^x\|R_j^x)$.
For either terminal region $E\in\{\mathcal G,\mathcal D\}$, its probability difference is at most $d_{\mathrm{TV}}(P_j^x,R_j^x)\le\sqrt{K_j(x)/2}$, since the terminal state is a measurable function of the start and block path.
Averaging under the appropriate conditional start law and applying Jensen's inequality gives
\begin{equation}
\begin{aligned}
a_j-\lambda_j^*&\le\mathbb E_{\rho_j^{\mathcal G}}\sqrt{K_j/2}
\le\sqrt{\epsilon_j^{\mathcal G}/2},\\
\eta_j^*-\eta_j&\le\mathbb E_{\rho_j^{\mathcal D}}\sqrt{K_j/2}
\le\sqrt{\epsilon_j^{\mathcal D}/2}.
\end{aligned}
\end{equation}
Combining the second inequality with $\eta_j\ge0$ proves Eq.~\eqref{eq:block-control}.

\paragraph{Relation to the path objective.}
Let $\mathcal L_{B,j}=\mathbb E_{\nu_j}K_j$ denote the block-specific on-policy KL. Splitting its expectation by the start region gives
\begin{equation}
\label{eq:block-kl-decomposition}
\mathcal L_{B,j}=(1-b_j)\epsilon_j^{\mathcal G}+b_j\epsilon_j^{\mathcal D}.
\end{equation}
For $0<b_j<1$, nonnegativity therefore implies $\epsilon_j^{\mathcal G}\le\mathcal L_{B,j}/(1-b_j)$ and $\epsilon_j^{\mathcal D}\le\mathcal L_{B,j}/b_j$; a zero-mass term contributes zero.
When all blocks use the chain-rule conditionals of the same matched joint laws and $H=JB$, the KL chain rule gives $\mathcal L_H^{\mathrm{ideal}}=\sum_{j=0}^{J-1}\mathcal L_{B,j}$.
For $JB<H$, the remaining suffix contributes an additional nonnegative conditional KL.
If instead a block loss is averaged under a different training start law $\widetilde\mu_j$, transfer to $\rho_j^E$ requires a measure comparison: whenever $\rho_j^E\ll\widetilde\mu_j$ and $d\rho_j^E/d\widetilde\mu_j\le M_j^E$, integration gives $\epsilon_j^E\le M_j^E\mathbb E_{\widetilde\mu_j}K_j$.
These relations control conditional averages; the finite-depth proposition explicitly assumes the required good- and degraded-start bounds at each evaluated block.

\begin{proof}[Proof of Proposition~\ref{thm:recursion}]
Under the proposition's assumptions, Eq.~\eqref{eq:block-control} gives $a_j\le A$ and $\eta_j\ge\underline\eta$. Together with Eq.~\eqref{eq:recursion-exact}, this implies
$b_{j+1}\le A(1-b_j)+(1-\underline\eta)b_j\le A+(1-\underline\eta)b_j$.
Probability bounds are taken in $[0,1]$, so $r=1-\underline\eta\in[0,1)$.
Induction yields $b_j\le r^jb_0+A\sum_{k=0}^{j-1}r^k=r^jb_0+A(1-r^j)/\underline\eta$ for $1\le j\le J$.
Together with the case $j=0$ and $b_j\le1$, this proves Eq.~\eqref{eq:recursion-bound}.
The monotone relaxation avoids any sign requirement on $1-A-\underline\eta$.
\end{proof}

\paragraph{Partial start coverage.}
Coverage is measured under the relevant conditional start law. If the student's per-start leakage is at most $A$ and its per-start recovery is at least $\underline\eta$ only on subsets with masses at least $1-\delta_j^{\mathcal G}$ under $\rho_j^{\mathcal G}$ and $1-\delta_j^{\mathcal D}$ under $\rho_j^{\mathcal D}$, respectively, the averaged bounds become
\begin{equation}
 a_j\le A+(1-A)\delta_j^{\mathcal G},\qquad
 \eta_j\ge(1-\delta_j^{\mathcal D})\underline\eta.
\end{equation}
To see this, bound leakage on uncovered good starts by one and recovery on uncovered degraded starts by zero, then average over the two subsets.
These adjusted quantities enter the recursion at each block. With complete degraded-start coverage and $\delta_j^{\mathcal G}\le\delta$, each affine update acquires at most $(1-A)\delta$; summing its geometrically weighted contributions gives additional envelope slack at block $j$ of at most $(1-A)\delta[1-(1-\underline\eta)^j]/\underline\eta$.

\paragraph{Feedback actions.}
The total-probability identity in Eq.~\eqref{eq:recursion-exact} also holds for feedback policies, without Markov or stationarity assumptions.
For its KL-based bounds, specify a policy $\pi_t(d\mathbf a_t\mid h_{t-1})$ and a nonanticipating teacher continuation $q_t^*(d\mathbf z_t\mid h_{t-1},\mathbf a_t)$, where $h_{t-1}$ contains the available history and policy state.
From the same block start, define joint action--latent laws
\begin{equation}
\begin{aligned}
P_j^\pi&=\prod_{t=jB+1}^{(j+1)B}\pi_t(d\mathbf a_t\mid h_{t-1})\,
p_\theta(d\mathbf z_t\mid h_{t-1},\mathbf a_t),\\
R_j^\pi&=\prod_{t=jB+1}^{(j+1)B}\pi_t(d\mathbf a_t\mid h_{t-1})\,
q_t^*(d\mathbf z_t\mid h_{t-1},\mathbf a_t).
\end{aligned}
\end{equation}
Under finite KL, the shared policy factors cancel in the likelihood ratio, and the chain rule expresses $D_{\mathrm{KL}}(P_j^\pi\|R_j^\pi)$ as the sum of conditional latent KLs averaged over student histories and actions.
Applying the preceding Pinsker argument to these joint laws proves the feedback version with its own KL, leakage, and recovery quantities.
This construction requires a specified nonanticipating reference; conditioning a bidirectional open-loop teacher on actions realized through feedback does not supply that reference automatically.
Throughout, $b_j$ measures occupancy at boundaries spaced by the fixed stride $B$; within-block visits are covered by path events or the bounded functionals in Appendix~\ref{app:proof-within}.

\subsection{Conditional Long-Run Extension}
\label{app:long-run}

Fix a persistent condition $\mathbf c$ and a time-homogeneous stationary policy $\pi$, and let $Y_j$ include the generator cache and policy state sufficient for a time-homogeneous Markov boundary process.
Suppose its initial law is supported on a measurable class $\mathcal R_{c,\pi}$, with $P_{\theta,B}^{c,\pi}(y,\mathcal R_{c,\pi})=1$ for every $y$ in that class, and its block kernel satisfies the Doeblin minorization
$P_{\theta,B}^{c,\pi}(y,\cdot)\ge\beta_{c,\pi}m_{c,\pi}(\cdot)$ there, with $\beta_{c,\pi}\in(0,1]$ and $m_{c,\pi}$ a probability measure supported on the class.
The regions $\mathcal G$ and $\mathcal D$ are lifted to $Y_j$ through its visual-state component.
Then the process for this fixed condition and policy has a unique stationary distribution $\nu_\infty^{c,\pi}$, and its empirical degraded-boundary fraction converges almost surely to $\nu_\infty^{c,\pi}(\mathcal D)$.
If, pointwise on $\mathcal R_{c,\pi}$, good states leak with probability at most $A_{c,\pi}$ and degraded states return with probability at least $\eta_{c,\pi}>0$, then
\begin{equation}
    \nu_\infty^{c,\pi}(\mathcal D)
    \le\frac{A_{c,\pi}}{A_{c,\pi}+\eta_{c,\pi}}.
\end{equation}
\begin{proof}
Suppress $c,\pi$ and write $P=P_{\theta,B}^{c,\pi}$. For $0<\beta<1$, the residual kernel $R=(P-\beta m)/(1-\beta)$ is a probability kernel on the closed class.
Thus $P=\beta m+(1-\beta)R$, and
\begin{equation}
\nu_\infty=\beta\sum_{k=0}^{\infty}(1-\beta)^k mR^k
\end{equation}
is a probability measure satisfying $\nu_\infty P=\nu_\infty$ by shifting the series.
Since Markov kernels contract total variation,
\begin{equation}
d_{\mathrm{TV}}(\nu P^k,\nu'P^k)
\le(1-\beta)^k d_{\mathrm{TV}}(\nu,\nu').
\end{equation}
This proves uniqueness and uniform convergence to $\nu_\infty$.
The decomposition also constructs regeneration times: at each transition, independently refresh from $m$ with probability $\beta$, otherwise use $R$.
After the first refresh, cycles are independent and identically distributed, with geometric lengths of mean $1/\beta$ and bounded per-step rewards.
The renewal-reward strong law gives almost-sure convergence of the empirical average of $\mathbf1_{\mathcal D}$; its limit equals $\nu_\infty(\mathcal D)$ by the displayed uniform convergence and boundedness.
For $\beta=1$, the process is i.i.d.\ from $m$ after one transition, giving the same conclusions.
Finally, stationarity and the pointwise leakage/return bounds give
$b_\infty\le A_{c,\pi}(1-b_\infty)+(1-\eta_{c,\pi})b_\infty$; rearranging proves the stated fraction bound.
\end{proof}
This is the standard minorization and regeneration argument for general-state-space chains~\citep{roberts2004general}.
This extension assumes the stated uniform minorization, leakage, and recovery conditions for the fixed prompt and policy. It bounds the long-run fraction of degraded block boundaries, rather than the probability of ever visiting a degraded state.

\section{Evaluation Details and Additional Results}

\subsection{Full VBench Results}
\label{app:full-vbench-results}

\begin{table}[H]
\centering
\caption{\textbf{Full 5-second VBench comparison.}}
\label{tab:full-main-t2v}
\small
\resizebox{\textwidth}{!}{%
\begin{tabular}{@{}lccccccc@{}}
\toprule
Model & \#Params & Resolution & \shortstack{Throughput\\(FPS) $\uparrow$} & \shortstack{Latency\\(s) $\downarrow$} & \shortstack{VBench\\Total $\uparrow$} & Quality $\uparrow$ & Semantic $\uparrow$ \\
\midrule
\multicolumn{8}{@{}l}{\emph{Bidirectional video diffusion models}} \\
LTX-Video~\citep{hacohen2024ltx} & 1.9B & $768{\times}512$ & 8.98 & 13.5 & 79.83 & 81.88 & 71.62 \\
Wan2.1~\citep{wan2025wan} & 1.3B & $832{\times}480$ & 0.78 & 103 & 83.37 & 84.30 & 79.65 \\
\midrule
\multicolumn{8}{@{}l}{\emph{Autoregressive video diffusion models}} \\
NOVA~\citep{deng2025nova} & 0.6B & $768{\times}480$ & 0.88 & 4.1 & 80.31 & 80.66 & 78.92 \\
Pyramid Flow~\citep{jin2024pyramidflow} & 2B & $640{\times}384$ & 6.70 & 2.5 & 80.75 & 83.41 & 70.11 \\
SkyReels-V2~\citep{chen2025skyreelsv2} & 1.3B & $960{\times}540$ & 0.49 & 112 & 81.97 & 83.96 & 74.01 \\
MAGI-1~\citep{teng2025magi} & 4.5B & $832{\times}480$ & 0.19 & 282 & 78.88 & 81.67 & 67.72 \\
\midrule
\multicolumn{8}{@{}l}{\emph{Distilled autoregressive video models}} \\
CausVid~\citep{yin2025causvid} & 1.3B & $832{\times}480$ & 17.0 & 0.69 & 81.33 & 83.98 & 70.72 \\
Self Forcing~\citep{huang2025selfforcing} & 1.3B & $832{\times}480$ & 17.0 & 0.69 & 83.74 & 84.48 & \underline{80.77} \\
Causal Forcing~\citep{zhu2026causalforcing} & 1.3B & $832{\times}480$ & 17.0 & 0.69 & \underline{84.04} & \underline{84.59} & \textbf{81.84} \\
\textbf{LongForcing (ours)} & 1.3B & $832{\times}480$ & $17.0^{\dagger}$ & $0.69^{\dagger}$ & \textbf{84.56} & \textbf{85.52} & 80.72 \\
\bottomrule
\end{tabular}
}
\par\smallskip
\begin{minipage}{\textwidth}
\scriptsize $\dagger$ Reference throughput and latency quoted from Causal Forcing~\citep{zhu2026causalforcing} for its inference configuration, which LongForcing retains.
\end{minipage}
\end{table}

\begin{table}[H]
\centering
\caption{\textbf{Full 30-second VBench-Long comparison.}}
\label{tab:full-main-t2v-long}
\setlength{\tabcolsep}{3.2pt}
\scriptsize
\resizebox{\textwidth}{!}{%
\begin{tabular}{@{}lccccccccc@{}}
\toprule
\multirow{2}{*}{Model}
& \multicolumn{3}{c}{Aggregate}
& \multicolumn{2}{c}{Long-horizon consistency}
& \multicolumn{2}{c}{Visual quality}
& \multicolumn{2}{c}{Temporal evolution} \\
\cmidrule(lr){2-4}\cmidrule(lr){5-6}\cmidrule(lr){7-8}\cmidrule(l){9-10}
& Total & Quality & Semantic
& \shortstack{Subject\\Cons.} & \shortstack{Background\\Cons.}
& Aesthetic & Imaging
& Dynamic & \shortstack{Temporal\\Style} \\
\midrule
LongLive~\citep{yang2025longlive}
& \underline{82.80} & 83.21 & \textbf{81.17}
& \textbf{98.36} & \underline{96.25}
& \textbf{63.77} & \textbf{68.35}
& 38.89 & \textbf{24.15} \\
Self Forcing~\citep{huang2025selfforcing}
& 81.49 & 82.67 & 76.77
& \underline{97.76} & 95.00
& 61.72 & \underline{67.70}
& 41.93 & 23.56 \\
Causal Forcing~\citep{zhu2026causalforcing}
& 81.05 & \underline{83.44} & 71.52
& 96.24 & 95.59
& 58.85 & 66.92
& \textbf{75.46} & 22.61 \\
\midrule
\textbf{LongForcing (ours)}
& \textbf{82.83} & \textbf{83.99} & \underline{78.19}
& 97.40 & \textbf{96.58}
& \underline{62.87} & 64.19
& \underline{60.93} & \underline{23.61} \\
\bottomrule
\end{tabular}
}
\vspace{1pt}
\begin{minipage}{0.99\textwidth}
\scriptsize Best and second-best results are shown in \textbf{bold} and \underline{underlined}, respectively. All scores are percentages.
\end{minipage}
\end{table}

\subsection{Causal-VAE Drift Statistics}
\label{app:drift-metric}

We re-encode 20 videos per model with the shared causal VAE.
Let $m_{v,c,t}$ be the spatial mean of channel $c$ at zero-based latent index $t$ in video $v$, and define its early reference as
\begin{equation}
    \bar m_{v,c}=\frac{1}{20}\sum_{r=1}^{20}m_{v,c,r}.
\end{equation}
This reference excludes the first latent ($t=0$) and averages latent indices $1$--$20$, corresponding to zero-based RGB frame indices $1$--$80$ (80 frames).
Panel (b) of Figure~\ref{fig:latent-drift} plots $\operatorname{median}_{v,c}|m_{v,c,t}-\bar m_{v,c}|$ at each $t$.
For the 81-frame model, panel (c) plots the signed channel-wise statistic $\operatorname{median}_{v}(m_{v,c,t}-\bar m_{v,c})$.
These statistics summarize the magnitude and channel-wise direction of latent shifts relative to the early reference window.

\subsection{Time-Resolved Drift Score}
\label{app:drift-score}

We sample frames at 1\,Hz from 16-FPS videos, using indices $0,16,\ldots,944$ (60 observations per video). High saturation and RGB clipping are measured at decoded resolution. With OpenCV HSV saturation and value normalized to $[0,1]$, the high-saturation ratio is the fraction of all pixels satisfying $S>0.85$ and $V>0.15$. The RGB-clipping ratio is the fraction with any 8-bit RGB channel at most 5 or at least 250.
For chroma edges, frames are resized to a 256-pixel short side with aspect ratio preserved, integer-rounded dimensions, and OpenCV area interpolation. After RGB-to-YCrCb conversion, let $g_Y$, $g_{Cr}$, and $g_{Cb}$ be the Euclidean magnitudes of horizontal and vertical $3{\times}3$ Sobel responses, computed in floating point on the unnormalized 8-bit channel scale. With $g_C=\sqrt{g_{Cr}^2+g_{Cb}^2}$, the chroma-edge ratio is the fraction of resized pixels satisfying $g_C>18$ and $g_C>1.5(g_Y+1)$.
For text-to-video HPSv3~\citep{ma2025hpsv3}, each matched pair uses identical filename-derived prompt prefixes, held fixed across models and timestamps. These scores measure temporal changes relative to each video's early reference. Interactive HPSv3 uses the prompt associated with each scene ID.

For video $i$ and metric $m$, let $b_{im}$ be the mean over seconds 0--4.
We orient deterioration as $d_{im}(t)=b_{im}-x_{im}(t)$ for HPSv3 and $d_{im}(t)=x_{im}(t)-b_{im}$ for three artifact ratios: high saturation, RGB clipping, and chroma edges.
The normalized deterioration is
\begin{equation}
z_{im}(t)=\operatorname{clip}\!\left(\frac{[d_{im}(t)]_+}{q_m},0,1\right).
\end{equation}
The frozen scales are the pooled 95th percentiles of $[d_{im}(t)]_+$, including zeros, over both the 81- and 321-frame text-to-video groups, all 20 matched videos per group, all 60 sampled seconds, and each metric independently:
$q_m=(12.5961,0.2977,0.4786,0.3202)$ for HPSv3, high saturation, RGB clipping, and chroma edges, respectively.
The per-video score is
\begin{equation}
\label{eq:drift-score}
D_i(t)=100\left[\tfrac12 z_{i,\mathrm{HPS}}(t)+\tfrac16\!\sum_{m\in\{\mathrm{sat},\mathrm{clip},\mathrm{chroma}\}}z_{im}(t)\right].
\end{equation}
Thus HPSv3 and the correlated artifact family receive equal total weight.
The score measures deterioration relative to each video's early reference: lower values indicate smaller increases in the selected degradation indicators. Absolute perceptual quality and motion are assessed separately by the benchmark and human evaluations.

Figure~\ref{fig:full-drift-score} applies the same frozen scales to both regimes.
Curves use a centered 5-second moving average, truncated to the available observations at each endpoint; numerical endpoint comparisons below use the unsmoothed 59-second observations.
The pointwise 95\% percentile bootstrap intervals resample complete prompt or scene trajectories 2,000 times  and apply the same smoothing within each replicate, keeping the normalization scales fixed. At each timestamp, the interval endpoints are the 2.5th and 97.5th percentiles of the bootstrap means.
The text-to-video and interactive panels contain 20 prompt-matched and 23 scene-matched videos per model, respectively.

\begin{figure}[H]
    \centering
    \includegraphics[width=\textwidth]{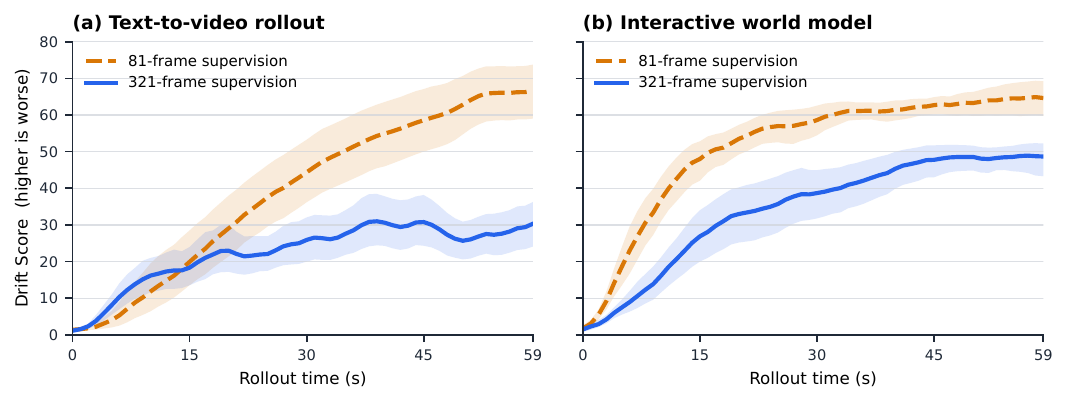}
    \caption{\textbf{Full time-resolved drift comparison.}
    Curves show mean Drift Score with 95\% bootstrap confidence intervals for text-to-video and interactive rollouts.}
    \label{fig:full-drift-score}
\end{figure}

At 59 seconds, 321-frame supervision reduces the mean Drift Score from 67.46 to 31.91 for text-to-video generation and from 64.76 to 48.95 for interactive world modeling, corresponding to relative reductions of 52.7\% and 24.4\%, respectively.

\subsection{Human Preference Evaluation}
\label{app:human-study}
Our analysis covers 20 shared cases evaluated by all 20 participants: 10 text-to-video prompts and 10 interactive world-model scenes.
Each pair compares 81- and 321-frame supervision with model identities hidden and left/right placement randomized.
Evaluators select A better, Tie, or B better for overall preference, temporal coherence, visual quality, and motion naturalness.
Each participant provides one response per case and criterion, yielding $20\times20=400$ judgments per criterion and 1,600 judgments across the four criteria.
Figure~\ref{fig:drift-human-study} reports descriptive preference proportions, assigning equal weight to each evaluator--case response and retaining ties in the denominator.

\section{Additional Qualitative Results}
\label{app:qualitative}

\subsection{Text-to-Video Rollouts}

Figure~\ref{fig:qualitative} follows four 60-second text-to-video rollouts trained with 321-frame supervision.
Across portrait, human-motion, animated-aerial, and deep-space prompts, the displayed frames show recognizable subjects and scene style as content evolves beyond the supervised horizon.

\begin{figure}[H]
    \centering
    \includegraphics[width=\textwidth]{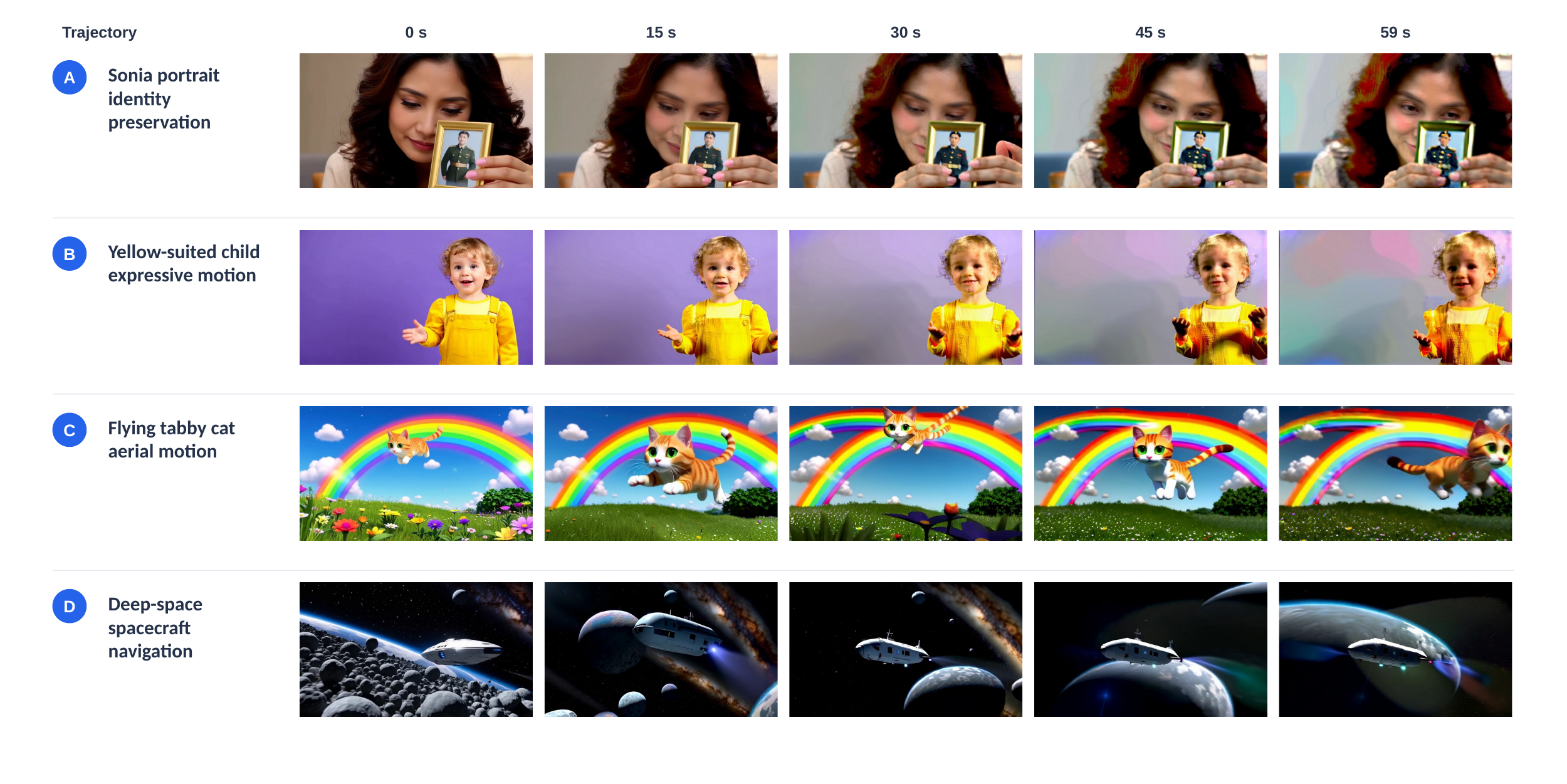}
    \caption{Qualitative 60-second text-to-video rollouts. Each row follows one LongForcing trajectory trained with 321-frame supervision; columns show frames at fixed video timestamps.}
    \label{fig:qualitative}
\end{figure}

\subsection{Supervision-Horizon Comparison}

Figure~\ref{fig:qualitative-horizon} compares matched prompts or initial scenes.
The 81-frame text-to-video examples exhibit progressive saturation and structural distortion, while the interactive examples develop horizontal banding and geometry collapse.
With 321-frame supervision, the displayed snapshots from all four trajectories preserve more scene structure and do not show the same late-stage artifacts.

\begin{figure}[H]
    \centering
    \includegraphics[width=\textwidth]{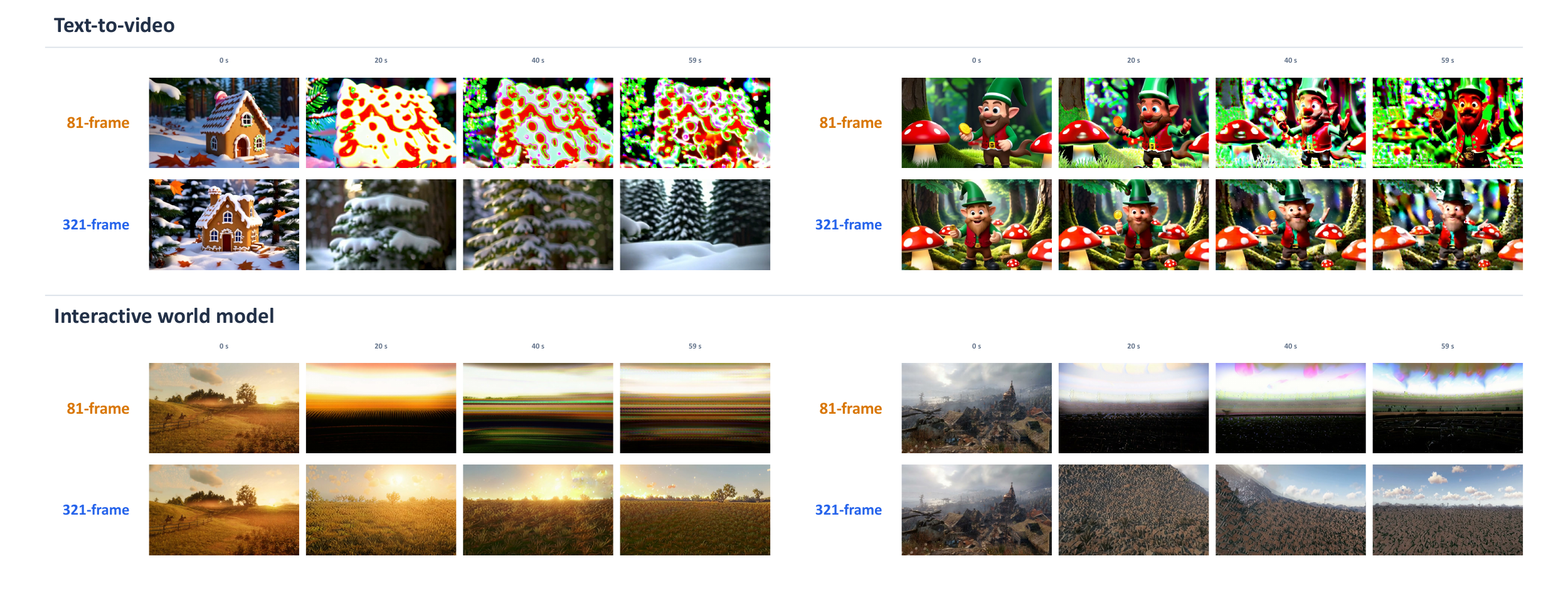}
    \caption{Qualitative comparison across supervision horizons. Upper and lower rows use 81- and 321-frame supervision, respectively, at matched rollout times from 0 to 59 seconds.}
    \label{fig:qualitative-horizon}
\end{figure}

\section{Discussion}
\label{app:discussion}

\paragraph{When does LongForcing help?}
Horizon scaling targets failures first reached beyond the shorter training window. Extending the on-policy rollout makes these later prefixes available to the teacher, while the horizon ablations quantify the resulting empirical changes in generation quality. The benefit depends on both supervision coverage and the quality of the teacher signal on those prefixes.

\paragraph{Cost.}
The additional computation occurs during training: the student generates longer paths and both score networks process longer sequences. At fixed spatial resolution, dense bidirectional attention has quadratic attention-score computation in temporal length; practical memory and runtime depend on the attention and gradient implementations. For a fixed inference configuration and output length, the student uses the same architecture and sampling operations as the short-horizon baseline.

\paragraph{Scope.}
The finite-horizon results quantify teacher-relative path agreement, while the blockwise extension additionally assumes controlled leakage, mismatch, and recovery on the student's start distribution. The long-run extension states its uniform dynamical assumptions separately. These conditions concern the specified model, domain, and action protocol. Visual quality, motion, and action following are evaluated separately, complementing the drift analysis.

\paragraph{How long is enough?}
A useful supervision horizon depends on when task-relevant dependencies and failure modes become observable, rather than on a universal duration.
For scene roaming, delayed effects may involve revisiting a location or maintaining an object after a long absence; for goal-directed interaction, an action's consequence may appear much later.
Our horizon sweep measures the benefit of extending supervision within the tested range; determining a sufficient horizon for a new task calls for evaluating those dependencies over the intended deployment duration.
A promising direction is to allocate longer training rollouts when delayed errors remain frequent, and to compare this adaptive allocation with fixed horizons under matched compute budgets.
Teacher quality and persistent memory offer complementary ways to make additional temporal coverage useful.

\paragraph{From visual rollout to long-term planning.}
Maintaining a coherent visual trajectory and deciding which events should occur over a long story or task are complementary problems.
LongForcing strengthens the former by supervising longer self-generated paths; long-term planning additionally requires representing goals, organizing events, and retaining consequences across distant steps.
This suggests a hierarchical approach in which a planner maintains an abstract state and selects subgoals, persistent memory records relevant past events, and a video model renders the resulting interactions.
Long-horizon supervision could then assess whether the rendered trajectory respects those plans and remembered consequences.
Evaluating delayed action effects, goal completion, and narrative consistency alongside visual quality would help determine whether longer rollouts also support more coherent long-term behavior.

%% file: main.bbl
\begin{thebibliography}{10}

\bibitem{bruce2024genie}
Jake Bruce, Michael~D. Dennis, Ashley Edwards, Jack Parker-Holder, Yuge Shi,
  Edward Hughes, Matthew Lai, Aditi Mavalankar, Richie Steigerwald, Chris Apps,
  et~al.
\newblock {Genie}: Generative interactive environments.
\newblock In {\em Proceedings of the 41st International Conference on Machine
  Learning}, volume 235 of {\em Proceedings of Machine Learning Research},
  pages 4603--4623. PMLR, 2024.

\bibitem{google2025genie3}
{Google DeepMind}.
\newblock {Genie} 3.
\newblock Google DeepMind, 2025.

\bibitem{valevski2025gamengen}
Dani Valevski, Yaniv Leviathan, Moab Arar, and Shlomi Fruchter.
\newblock Diffusion models are real-time game engines.
\newblock In {\em International Conference on Learning Representations}, 2025.

\bibitem{decart2024oasis}
{Decart}, Julian Quevedo, Quinn McIntyre, Spruce Campbell, Xinlei Chen, and
  Robert Wachen.
\newblock {Oasis}: A universe in a transformer.
\newblock Decart and Etched, 2024.

\bibitem{zhang2025matrixgame}
Yifan Zhang, Chunli Peng, Boyang Wang, Puyi Wang, Qingcheng Zhu, Fei Kang, Biao
  Jiang, Zedong Gao, Eric Li, Yang Liu, et~al.
\newblock {Matrix-Game}: Interactive world foundation model.
\newblock {\em arXiv preprint arXiv:2506.18701}, 2025.

\bibitem{sun2025worldplay}
Wenqiang Sun, Haiyu Zhang, Haoyuan Wang, Junta Wu, Zehan Wang, Zhenwei Wang,
  Yunhong Wang, Jun Zhang, Tengfei Wang, and Chunchao Guo.
\newblock {WorldPlay}: Towards long-term geometric consistency for real-time
  interactive world modeling.
\newblock In {\em International Conference on Machine Learning}, 2026.

\bibitem{kim2024fifo}
Jihwan Kim, Junoh Kang, Jinyoung Choi, and Bohyung Han.
\newblock {FIFO-Diffusion}: Generating infinite videos from text without
  training.
\newblock In {\em Advances in Neural Information Processing Systems},
  volume~37, 2024.

\bibitem{huang2025selfforcing}
Xun Huang, Zhengqi Li, Guande He, Mingyuan Zhou, and Eli Shechtman.
\newblock {Self Forcing}: Bridging the train-test gap in autoregressive video
  diffusion.
\newblock In {\em Advances in Neural Information Processing Systems},
  volume~38, 2025.

\bibitem{yin2025causvid}
Tianwei Yin, Qiang Zhang, Richard Zhang, William~T. Freeman, Fr{\'e}do Durand,
  Eli Shechtman, and Xun Huang.
\newblock From slow bidirectional to fast autoregressive video diffusion
  models.
\newblock In {\em Proceedings of the IEEE/CVF Conference on Computer Vision and
  Pattern Recognition}, pages 22963--22974, 2025.

\bibitem{zhu2026causalforcing}
Hongzhou Zhu, Min Zhao, Guande He, Hang Su, Chongxuan Li, and Jun Zhu.
\newblock {Causal Forcing}: Autoregressive diffusion distillation done right
  for high-quality real-time interactive video generation.
\newblock In {\em International Conference on Machine Learning}, 2026.
\newblock arXiv:2602.02214.

\bibitem{cui2025selfforcingpp}
Jiaxing Cui, Jie Wu, Ming Li, Tao Yang, Xiaojie Li, Rui Wang, Andrew Bai,
  Yuanhao Ban, and Cho-Jui Hsieh.
\newblock {Self-Forcing++}: Towards minute-scale high-quality video generation.
\newblock In {\em International Conference on Learning Representations}, 2026.

\bibitem{li2026rolling}
Haodong Li, Shaoteng Liu, Zhe Lin, and Manmohan Chandraker.
\newblock {Rolling Sink}: Bridging limited-horizon training and open-ended
  testing in autoregressive video diffusion.
\newblock {\em arXiv preprint arXiv:2602.07775}, 2026.

\bibitem{yi2025deepforcing}
Jung Yi, Wooseok Jang, Paul~Hyunbin Cho, Jisu Nam, Heeji Yoon, and Seungryong
  Kim.
\newblock {Deep Forcing}: Training-free long video generation with deep sink
  and participative compression.
\newblock In {\em International Conference on Machine Learning}, 2026.

\bibitem{cui2026lol}
Justin Cui, Jie Wu, Ming Li, Tao Yang, Xiaojie Li, Rui Wang, Andrew Bai,
  Yuanhao Ban, and Cho-Jui Hsieh.
\newblock {LoL}: Longer than longer, scaling video generation to hour.
\newblock In {\em Proceedings of the IEEE/CVF Conference on Computer Vision and
  Pattern Recognition}, pages 38132--38142, 2026.

\bibitem{zhu2025astra}
Yixuan Zhu, Jiaqi Feng, Wenzhao Zheng, Yuan Gao, Xin Tao, Pengfei Wan, Jiwen
  Lu, and Jie Zhou.
\newblock {Astra}: General interactive world model with autoregressive
  denoising.
\newblock In {\em International Conference on Learning Representations}, 2026.

\bibitem{wu2021godiva}
Chenfei Wu, Lun Huang, Qianxi Zhang, Binyang Li, Lei Ji, Fan Yang, Guillermo
  Sapiro, and Nan Duan.
\newblock {GODIVA}: Generating open-domain videos from natural descriptions.
\newblock {\em arXiv preprint arXiv:2104.14806}, 2021.

\bibitem{hong2022cogvideo}
Wenyi Hong, Ming Ding, Wendi Zheng, Xinghan Liu, and Jie Tang.
\newblock {CogVideo}: Large-scale pretraining for text-to-video generation via
  transformers.
\newblock In {\em International Conference on Learning Representations}, 2023.

\bibitem{kondratyuk2024videopoet}
Dan Kondratyuk, Lijun Yu, Xiuye Gu, Jos{\'e} Lezama, Jonathan Huang, Grant
  Schindler, Rachel Hornung, Vighnesh Birodkar, Jimmy Yan, Ming-Chang Chiu,
  et~al.
\newblock {VideoPoet}: A large language model for zero-shot video generation.
\newblock In {\em International Conference on Machine Learning}, 2024.

\bibitem{yan2021videogpt}
Wilson Yan, Yunzhi Zhang, Pieter Abbeel, and Aravind Srinivas.
\newblock {VideoGPT}: Video generation using {VQ-VAE} and transformers.
\newblock {\em arXiv preprint arXiv:2104.10157}, 2021.

\bibitem{chen2024diffusionforcing}
Boyuan Chen, Diego Mart{\'i}~Mons{\'o}, Yilun Du, Max Simchowitz, Russ Tedrake,
  and Vincent Sitzmann.
\newblock {Diffusion Forcing}: Next-token prediction meets full-sequence
  diffusion.
\newblock In {\em Advances in Neural Information Processing Systems},
  volume~37, 2024.

\bibitem{xie2025progressive}
Desai Xie, Zhan Xu, Yicong Hong, Hao Tan, Difan Liu, Feng Liu, Arie Kaufman,
  and Yang Zhou.
\newblock Progressive autoregressive video diffusion models.
\newblock In {\em Proceedings of the IEEE/CVF Conference on Computer Vision and
  Pattern Recognition Workshops}, pages 6376--6386, 2025.

\bibitem{sun2025ardiffusion}
Mingzhen Sun, Weining Wang, Gen Li, Jiawei Liu, Jiahui Sun, Wanquan Feng,
  Shanshan Lao, Siyu Zhou, Qian He, and Jing Liu.
\newblock {AR-Diffusion}: Asynchronous video generation with auto-regressive
  diffusion.
\newblock In {\em Proceedings of the IEEE/CVF Conference on Computer Vision and
  Pattern Recognition}, pages 7364--7373, 2025.

\bibitem{teng2025magi}
{Sand.ai}, Hansi Teng, Hongyu Jia, Lei Sun, Lingzhi Li, Maolin Li, Mingqiu
  Tang, Shuai Han, Tianning Zhang, W.~Q. Zhang, et~al.
\newblock {MAGI-1}: Autoregressive video generation at scale.
\newblock {\em arXiv preprint arXiv:2505.13211}, 2025.

\bibitem{kodaira2026streamdit}
Akio Kodaira, Tingbo Hou, Ji~Hou, Markos Georgopoulos, Felix Juefei-Xu,
  Masayoshi Tomizuka, and Yue Zhao.
\newblock {StreamDiT}: Real-time streaming text-to-video generation.
\newblock In {\em Proceedings of the IEEE/CVF Conference on Computer Vision and
  Pattern Recognition}, pages 29200--29210, 2026.

\bibitem{liu2025rolling}
Kunhao Liu, Wenbo Hu, Jiale Xu, Ying Shan, and Shijian Lu.
\newblock {Rolling Forcing}: Autoregressive long video diffusion in real time.
\newblock In {\em International Conference on Learning Representations}, 2026.

\bibitem{liu2026diagdistill}
Jinxiu Liu, Xuanming Liu, Kangfu Mei, Yandong Wen, Ming-Hsuan Yang, and Weiyang
  Liu.
\newblock Streaming autoregressive video generation via diagonal distillation.
\newblock In {\em International Conference on Learning Representations}, 2026.

\bibitem{qiu2024freenoise}
Haonan Qiu, Menghan Xia, Yong Zhang, Yingqing He, Xintao Wang, Ying Shan, and
  Ziwei Liu.
\newblock {FreeNoise}: Tuning-free longer video diffusion via noise
  rescheduling.
\newblock In {\em International Conference on Learning Representations}, 2024.

\bibitem{zhao2026relax}
Zengqun Zhao, Yanzuo Lu, Ziquan Liu, Jifei Song, Jiankang Deng, and Ioannis
  Patras.
\newblock {Relax Forcing}: Relaxed {KV}-memory for consistent long video
  generation.
\newblock In {\em British Machine Vision Conference}, 2026.
\newblock Accepted for publication.

\bibitem{kim2026memrope}
Youngrae Kim, Qixin Hu, C.-C.~Jay Kuo, and Peter~A. Beerel.
\newblock {MemRoPE}: Training-free infinite video generation via evolving
  memory tokens.
\newblock In {\em European Conference on Computer Vision}, 2026.

\bibitem{yang2025longlive}
Shuai Yang, Wei Huang, Ruihang Chu, Yicheng Xiao, Yuyang Zhao, Xianbang Wang,
  Muyang Li, Enze Xie, Yingcong Chen, Yao Lu, et~al.
\newblock {LongLive}: Real-time interactive long video generation.
\newblock In {\em International Conference on Learning Representations}, 2026.

\bibitem{chen2026contextforcing}
Shuo Chen, Cong Wei, Sun Sun, Tiancheng Shen, Ping Nie, Kai Zou, Ge~Zhang,
  Ming-Hsuan Yang, and Wenhu Chen.
\newblock {Context Forcing}: Consistent autoregressive video generation with
  long context.
\newblock In {\em International Conference on Machine Learning}, 2026.

\bibitem{zou2026hiar}
Kai Zou, Dian Zheng, Hongbo Liu, Tiankai Hang, Bin Liu, and Nenghai Yu.
\newblock {HiAR}: Efficient autoregressive long video generation via
  hierarchical denoising.
\newblock In {\em European Conference on Computer Vision}, volume 17008 of {\em
  Lecture Notes in Computer Science}, pages 225--242. Springer, 2026.

\bibitem{yuan2026helios}
Shenghai Yuan, Yuanyang Yin, Zongjian Li, Xinwei Huang, Xiao Yang, and Li~Yuan.
\newblock {Helios}: Real real-time long video generation model.
\newblock {\em arXiv preprint arXiv:2603.04379}, 2026.

\bibitem{alonso2024diamond}
Eloi Alonso, Adam Jelley, Vincent Micheli, Anssi Kanervisto, Amos Storkey, Tim
  Pearce, and Fran{\c{c}}ois Fleuret.
\newblock Diffusion for world modeling: Visual details matter in {Atari}.
\newblock In {\em Advances in Neural Information Processing Systems},
  volume~37, pages 58757--58791, 2024.

\bibitem{li2025gamecraft}
Jiaqi Li, Junshu Tang, Zhiyong Xu, Longhuang Wu, Yuan Zhou, Shuai Shao, Tianbao
  Yu, Zhiguo Cao, and Qinglin Lu.
\newblock {Hunyuan-GameCraft}: High-dynamic interactive game video generation
  with hybrid history condition.
\newblock {\em arXiv preprint arXiv:2506.17201}, 2025.

\bibitem{tencent2025hyworld15}
{Tencent Hunyuan}.
\newblock {HY-World 1.5}: A systematic framework for interactive world modeling
  with real-time latency and geometric consistency.
\newblock Technical report, Tencent Hunyuan, 2025.

\bibitem{shin2025motionstream}
Joonghyuk Shin, Zhengqi Li, Richard Zhang, Jun-Yan Zhu, Jaesik Park, Eli
  Shechtman, and Xun Huang.
\newblock {MotionStream}: Real-time video generation with interactive motion
  controls.
\newblock In {\em International Conference on Learning Representations}, 2026.

\bibitem{xu2026worldroambench}
Ting-Bing Xu, Jiacheng Sui, Zhe Gao, Kewei Shi, Wenjin Yang, Zhicheng Liu,
  Zhaoxu Sun, Mingchao Sun, Hongyu Pan, Fan Jiang, et~al.
\newblock {WorldRoamBench}: An open-world benchmark for long-horizon stability
  of interactive world models.
\newblock {\em arXiv preprint arXiv:2606.31672}, 2026.

\bibitem{lipman2022flow}
Yaron Lipman, Ricky T.~Q. Chen, Heli Ben-Hamu, Maximilian Nickel, and Matthew
  Le.
\newblock Flow matching for generative modeling.
\newblock In {\em International Conference on Learning Representations}, 2023.

\bibitem{yin2024dmd}
Tianwei Yin, Micha{\"e}l Gharbi, Richard Zhang, Eli Shechtman, Fr{\'e}do
  Durand, William~T. Freeman, and Taesung Park.
\newblock One-step diffusion with distribution matching distillation.
\newblock In {\em Proceedings of the IEEE/CVF Conference on Computer Vision and
  Pattern Recognition}, pages 6613--6623, 2024.

\bibitem{yin2024improved}
Tianwei Yin, Micha{\"e}l Gharbi, Taesung Park, Richard Zhang, Eli Shechtman,
  Fr{\'e}do Durand, and William~T. Freeman.
\newblock Improved distribution matching distillation for fast image synthesis.
\newblock In {\em Advances in Neural Information Processing Systems},
  volume~37, pages 47455--47487, 2024.

\bibitem{wan2025wan22repo}
{Team Wan}.
\newblock {Wan2.2}: Open and advanced large-scale video generative models.
\newblock GitHub repository, 2025.

\bibitem{wan2025wan}
{Team Wan}.
\newblock {Wan}: Open and advanced large-scale video generative models.
\newblock {\em arXiv preprint arXiv:2503.20314}, 2025.

\bibitem{huang2024vbench}
Ziqi Huang, Yinan He, Jiashuo Yu, Fan Zhang, Chenyang Si, Yuming Jiang, Yuanhan
  Zhang, Tianxing Wu, Qingyang Jin, Nattapol Chanpaisit, et~al.
\newblock {VBench}: Comprehensive benchmark suite for video generative models.
\newblock In {\em Proceedings of the IEEE/CVF Conference on Computer Vision and
  Pattern Recognition}, pages 21807--21818, 2024.

\bibitem{huang2025vbenchpp}
Ziqi Huang, Fan Zhang, Xiaojie Xu, Yinan He, Jiashuo Yu, Ziyue Dong, Qianli Ma,
  Nattapol Chanpaisit, Chenyang Si, Yuming Jiang, et~al.
\newblock {VBench++}: Comprehensive and versatile benchmark suite for video
  generative models.
\newblock {\em IEEE Transactions on Pattern Analysis and Machine Intelligence},
  2025.

\bibitem{alibaba2026happyoyster}
{Alibaba Cloud}.
\newblock Alibaba launches {HappyOyster}, a world model product for real-time
  immersive creation and interaction.
\newblock Alibaba Cloud Blog, 2026.

\bibitem{robbyant2026lingbotworld}
{Robbyant Team}, Zelin Gao, Qiuyu Wang, Yanhong Zeng, Jiapeng Zhu, Ka~Leong
  Cheng, Yixuan Li, Hanlin Wang, Yinghao Xu, Shuailei Ma, et~al.
\newblock Advancing open-source world models.
\newblock {\em arXiv preprint arXiv:2601.20540}, 2026.

\bibitem{gao2026lingbotworldv2}
Zelin Gao, Qiuyu Wang, Jiapeng Zhu, Jingye Chen, Zichen Liu, Qingyan Bai,
  Jiahao Wang, Yufeng Yuan, Hanlin Wang, Yichong Lu, et~al.
\newblock Infinite worlds with versatile interactions.
\newblock {\em arXiv preprint arXiv:2607.07534}, 2026.

\bibitem{jiang2026abotworld}
Fan Jiang, Zhaoxu Sun, Mengchao Wang, Ziyu Zhu, Chiyu Wang, Yunpeng Zhang,
  et~al.
\newblock {ABot-World-0}: Infinite interactive world rollout on a single
  desktop {GPU}.
\newblock {\em arXiv preprint arXiv:2607.19191}, 2026.

\bibitem{xue2025ultravideo}
Zhucun Xue, Jiangning Zhang, Teng Hu, Haoyang He, Yinan Chen, Yuxuan Cai,
  Yabiao Wang, Chengjie Wang, Yong Liu, Xiangtai Li, et~al.
\newblock {UltraVideo}: High-quality {UHD} video dataset with comprehensive
  captions.
\newblock In {\em Advances in Neural Information Processing Systems},
  volume~38, 2025.

\bibitem{roberts2004general}
Gareth~O. Roberts and Jeffrey~S. Rosenthal.
\newblock General state space {Markov} chains and {MCMC} algorithms.
\newblock {\em Probability Surveys}, 1:20--71, 2004.

\bibitem{hacohen2024ltx}
Yoav HaCohen, Nisan Chiprut, Benny Brazowski, Daniel Shalem, Dudu Moshe, Eitan
  Richardson, Eran Levin, Guy Shiran, Nir Zabari, Ori Gordon, et~al.
\newblock {LTX-Video}: Realtime video latent diffusion.
\newblock {\em arXiv preprint arXiv:2501.00103}, 2024.

\bibitem{deng2025nova}
Haoge Deng, Ting Pan, Haiwen Diao, Zhengxiong Luo, Yufeng Cui, Huchuan Lu,
  Shiguang Shan, Yonggang Qi, and Xinlong Wang.
\newblock Autoregressive video generation without vector quantization.
\newblock In {\em International Conference on Learning Representations}, 2025.

\bibitem{jin2024pyramidflow}
Yang Jin, Zhicheng Sun, Ningyuan Li, Kun Xu, Hao Jiang, Nan Zhuang, Quzhe
  Huang, Yang Song, Yadong Mu, et~al.
\newblock Pyramidal flow matching for efficient video generative modeling.
\newblock In {\em International Conference on Learning Representations}, 2025.

\bibitem{chen2025skyreelsv2}
Guibin Chen, Dixuan Lin, Jiangping Yang, Chunze Lin, Junchen Zhu, Mingyuan Fan,
  Hao Zhang, Sheng Chen, Zheng Chen, Chengcheng Ma, et~al.
\newblock {SkyReels-V2}: Infinite-length film generative model.
\newblock {\em arXiv preprint arXiv:2504.13074}, 2025.

\bibitem{ma2025hpsv3}
Yuhang Ma, Xiaoshi Wu, Keqiang Sun, and Hongsheng Li.
\newblock {HPSv3}: Towards wide-spectrum human preference score.
\newblock In {\em Proceedings of the IEEE/CVF International Conference on
  Computer Vision (ICCV)}, pages 15086--15095, 2025.

\end{thebibliography}
